\title{Enhancing Compositional Generalization via \\Compositional Feature Alignment}
\author{\name Haoxiang Wang \email hwang264@illinois.edu \\
      \addr Department of Electrical and Computer Engineering\\
      University of Illinois Urbana-Champaign
      \AND
      \name Haozhe Si \email haozhes3@illinois.edu \\
      \addr Department of Electrical and Computer Engineering\\
      University of Illinois Urbana-Champaign
      \AND
      \name Huajie Shao \email hshao@wm.edu\\
      \addr Department of Computer Science \\
      William and Mary
      \AND
      \name Han Zhao \email hanzhao@illinois.edu\\
      \addr Department of Computer Science\\
      University of Illinois Urbana-Champaign
}
\begin{document}

\maketitle

\begin{abstract}
Real-world applications of machine learning models often confront data distribution shifts, wherein discrepancies exist between the training and test data distributions. In the common multi-domain multi-class setup, as the number of classes and domains scales up, it becomes infeasible to gather training data for every domain-class combination. This challenge naturally leads the quest for models with Compositional Generalization (CG) ability, where models can generalize to unseen domain-class combinations. To delve into the CG challenge, we develop CG-Bench, a suite of CG benchmarks derived from existing real-world image datasets, and observe that the prevalent pretraining-finetuning paradigm on foundational models, such as CLIP and DINOv2, struggles with the challenge. To address this challenge, we propose Compositional Feature Alignment (CFA), a simple two-stage finetuning technique that i) learns two orthogonal linear heads on a pretrained encoder with respect to class and domain labels, and ii) fine-tunes the encoder with the newly learned head frozen. We theoretically and empirically justify that CFA encourages compositional feature learning of pretrained models. We further conduct extensive experiments on CG-Bench for CLIP and DINOv2, two powerful pretrained vision foundation models. Experiment results show that CFA outperforms common finetuning techniques in compositional generalization, corroborating CFA's efficacy in compositional feature learning. The code is released at \url{https://github.com/Haoxiang-Wang/Compositional-Feature-Alignment}.

\end{abstract}

% \vspace{-1em}
\section{Introduction}\label{sec:intro}
% \vspace{-1em}
Over the past decade, machine learning has emerged as a transformative technology, driving advancements across various domains such as computer vision \citep{alexnet,resnet}, natural language processing \citep{bert, GPT-3}, biology \citep{AlphaFold}, etc. These innovations have been fueled by the development of increasingly sophisticated models, the availability of large-scale datasets, and the growth of computational power. However, a crucial obstacle persists in applying machine learning models to real-world scenarios: their performance tends to degrade significantly when confronted with data distribution shifts \citep{WILDS, DomainBed,BREEDS}, where the data distribution during testing differs from that used in training. \looseness=-1

In an effort to overcome this problem, the machine learning research community has turned its attention to Out-of-Distribution (OOD) generalization, with the goal of developing models that are robust under data distribution shifts. Existing research primarily investigates various types of data distribution shifts, such as domain generalization \citep{DomainBed, WILDS}, subpopulation shift \citep{BREEDS, SubpopBench}, input corruption \citep{ImageNet}, and spurious correlation \citep{GroupDRO}. 
While generalizing to these different type of distribution shifts has garnered significant attention, there exists another realistic yet understudied challenge in OOD generalization: \textit{compositional generalization} (CG).

Within the multi-domain, multi-class context, assumes we have $E$ domains (i.e., environments) and $K$ classes, leading to $E \times K$ pairs of domain and class combinations, which could be formulated as elements in a $E\times K$ matrix as shown in~\Cref{fig:CG-illustration}. In domain generalization (DG), the learner has access to data from all the classes and all the domains and aims to make predictions on data from a new, unseen domain. Nonetheless, in real-world scenarios, given the large number of categories, e.g., 1000 classes in ImageNet, one cannot always collect the complete data from all the domains. Put in another word, the training data might not cover all the possible combinations of domains and classes, as represented by each cell in the matrix in~\Cref{fig:CG-illustration}. This sparsity pattern becomes especially pronounced when the number of classes or environments is large because collecting comprehensive training data for each combination becomes a formidable task. In such cases, a key challenge arises: \textit{can the model generalize to unseen domain-class combinations?} This is the compositional generalization (CG) challenge we aim to tackle in this work. \looseness=-1
 
The CG challenge manifests ubiquitously across various real-world applications. For example, data distribution shifts in certain existing DG datasets, such as iWildCam~\citep{iWildCam, WILDS}, are more accurately characterized by CG than DG. Moreover, we find that the widely-used method of finetuning pretrained (foundation) models struggles to tackle the CG challenge. This emphasizes the need for the machine learning community to recognize and address this emerging distribution shift challenge with innovative solutions.\looseness=-1

\begin{figure}[t!]
\vspace{-1em}
    \centering
    \includegraphics[width=.99\textwidth]{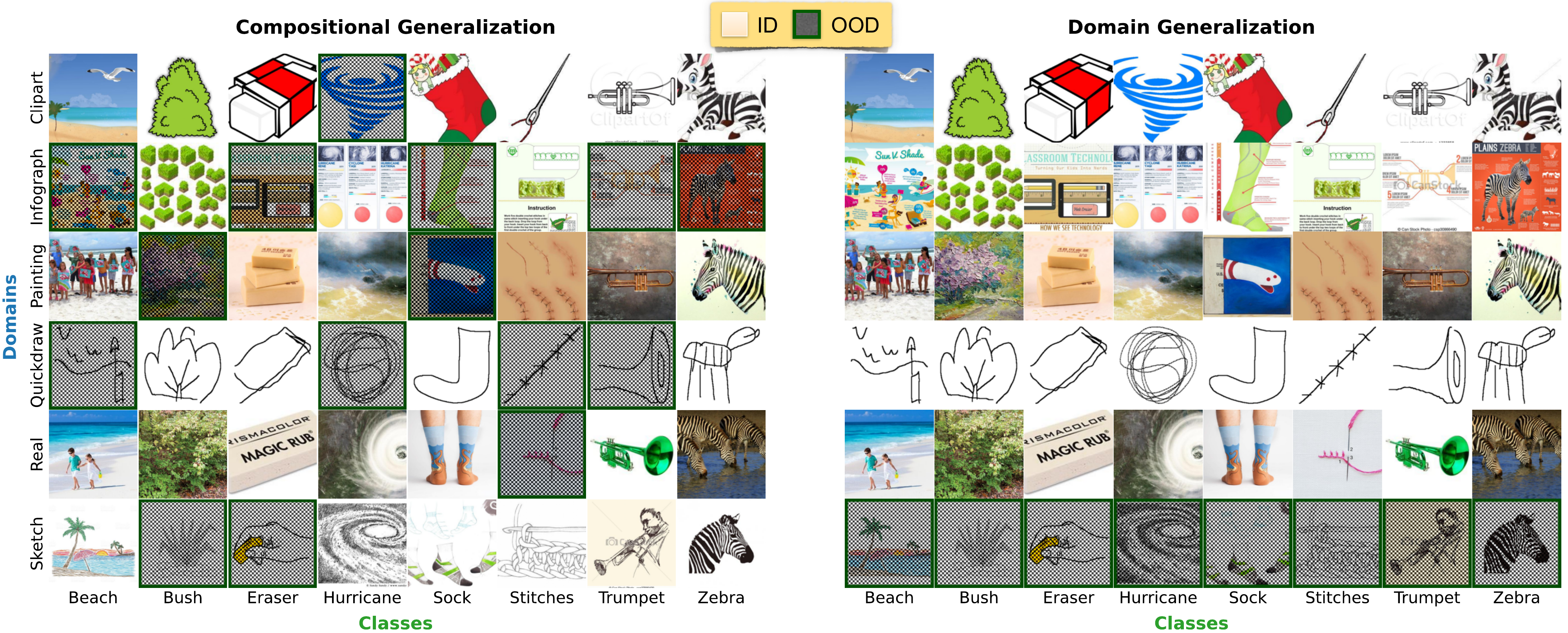}
    \caption{Compositional generalization (CG) vs. domain generalization (DG). Masked entries are unseen domain-class combinations, while unmasked ones exist in the training dataset.}
    \label{fig:CG-illustration}
    \vspace{-1.5em}
\end{figure}
\paragraph{Our Contributions}
In our attempt to tackle this challenge, we draw inspiration from existing lines of research such as invariant risk minimization (IRM)~\citep{IRM} and invariant-feature subspace recovery (ISR)~\citep{ISR}. In particular,~\citet{ISR} showed that under certain structural conditions in the data generative process, post-processing methods via subspace projection can effectively learn invariant features that can generalize across unseen domains from the same data generative process but under different interventions on non-causal factors. Empirically, we find that if the learned features (i.e., the outputs of the last hidden layer) conform to a compositional structure where the subspace of \textit{domain-related} features is orthogonal to that of \textit{class-related} features, the corresponding model can generalize across unknown domain-class pairs. Motivated by this observation, to induce features that match this compositional structure, we introduce a two-stage finetuning approach termed Compositional Feature Alignment (CFA), which is also inspired by recent progress in the literature of neural collapse \citep{neural-collapse,zhu2021geometric,yang2022inducing}. 

More specifically, upon the features given by the encoder, we construct two heads, one for predicting the target label of interest and the other for predicting the domain index. Note that the two-head architecture is not new, and has been widely used in domain adversarial neural networks~\citep{ganin2015unsupervised,zhao2018adversarial}. However, different from domain adversarial neural networks where adversarial training through minimax optimization is needed, our proposed method is computationally lightweight and can be divided into two stages.

CFA first identifies a proper compositional feature structure via a two-head regularized linear probing (i.e., training linear heads with the encoder frozen). Subsequently, the encoder undergoes finetuning with the heads being frozen. Leveraging tools from the neural collapse literature, we theoretically prove that CFA can effectively align features with the compositional feature structure under mild assumptions. Furthermore, we construct a synthetic Color-CIFAR dataset to examine CFA empirically and observe that CFA can indeed align features with the desired compositional feature structure.

To facilitate the studies of compositional generalization, we curate a suite of benchmarks for the compositional generalization challenge, building on four real-world image datasets: OfficeHome~\citep{Office-Home}, DomainNet~\citep{DomainNet}, WILDS-iWildCam~\citep{iWildCam}, and WILDS-FMoW~\citep{FMoW}. We consider two powerful pretrained vision encoders, CLIP~\citep{CLIP} and DINOv2~\citep{DINOv2}, with ViT-B \citep{ViT} architecture, and apply different finetuning methods to them, including linear probing, full finetuning, LP-FT \citep{LP-FT}, reweighting and our proposed CFA. Extensive experimental results on CG-Suite show that CFA-finetuned models can indeed generalize to unseen domain-class combinations better than other finetuning methods. We hope that the curated CG-Suite can facilitate future research on compositional generalization.

% \vspace{-1em}
\section{Compositional Feature Alignment}\label{sec:algo}
% \vspace{-1em}
\begin{figure}[t!]
    \vspace{-1em}
    \centering
    \includegraphics[width=.4\linewidth]{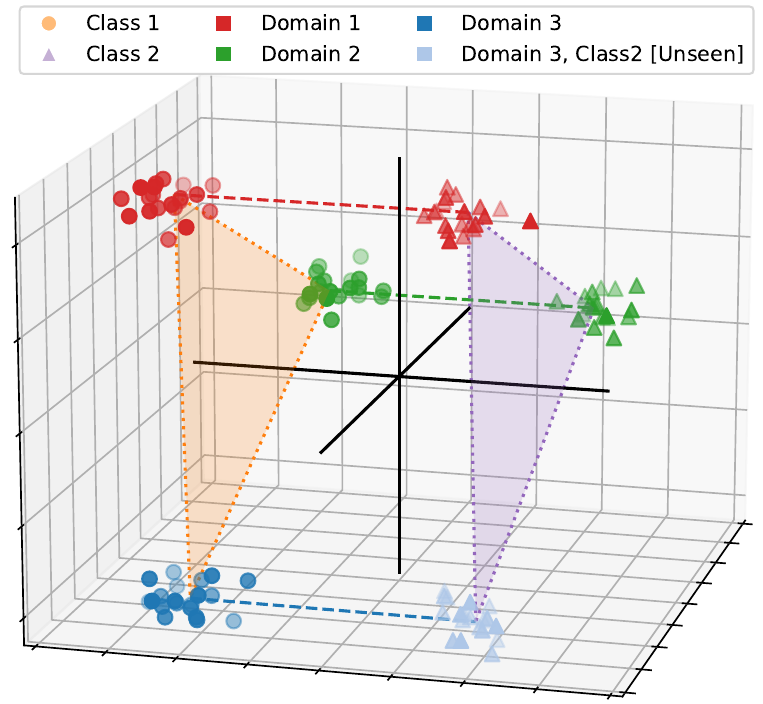}
    \caption{Illustration of a desired compositional feature structure for compositional generalization.}
    \label{fig:ideal_data}
    \vspace{-1em}
\end{figure}

The key to the CG challenge is to identify and encode the compositional relationship between classes and domains by learning the features. Hence, it is important to first understand what kind of feature structures are desired for compositional generalization. To this end, we first provide a formal definition of the compositional feature structure, and then explain our motivations behind the definition.  

\begin{definition}[Compositional Feature Structure]\label{def:feature-structure}
For any input $x$ from class $y \in \{1,\dots,K\}$ and domain $e \in \{1,\dots,E\}$, its feature $z\in \bR^{d}$ satisfies the compositional feature structure as long as $z$ can be decomposed as: \looseness=-1
\vspace{-2em}
\begin{equation*}
\begin{minipage}[t]{.45\linewidth}
\begin{align*}
    \text{Class Feature: }& z_{1} \sim \mathcal{N}(\mu_{1}^y, \Sigma_{1}^{y}) \in \mathbb{R}^{d_1},\nonumber\\
    \text{Domain Feature: }& z_{2}  \sim \mathcal{N}( \mu_2^e, \Sigma_{2}^{e}) \in \mathbb{R}^{d_2}\nonumber
\end{align*}
\end{minipage}
\hfill
\begin{minipage}[t]{.45\linewidth}
\begin{align*}
    \text{Total Feature: }& z = R \begin{bmatrix} z_1 \\ z_2\\ z_{\mathrm{noise}}
    \end{bmatrix}
\end{align*}
\end{minipage}
\end{equation*}
where $z_{\mathrm{noise}} \in \bR^{d - d_1 - d_2}$ represent noise features irrelevant to classes and domains, and $R\in \bR^{d\times d}$ is a full-rank orthonormal matrix. Note that $\mu_{1}^y, \Sigma_{1}^{y}$ have dependence on $y$, while $\mu_{2}^e, \Sigma_{2}^{e}$ rely on $e$. \looseness=-1
\end{definition}

Fig.~\ref{fig:ideal_data} provides a visualization of this compositional feature structure for a simple setup of 2 classes and 3 domains, where one domain-class combination is absent in the training set. It is evident that class features and domain features exist in orthogonal subspaces, as required by \Cref{def:feature-structure}. In this case, a linear classifier that exclusively utilizes class features and disregards domain and noise features can effectively generalize the unseen domain-class combination. 

It is noteworthy that even with the perfect alignment of learned features to this compositional structure on all training data, there is no guarantee that features from unseen domain-class combinations will still conform to this structure.

\begin{figure}[t!]
    \vspace{-1em}
    \centering
    \includegraphics[width=0.9\textwidth]{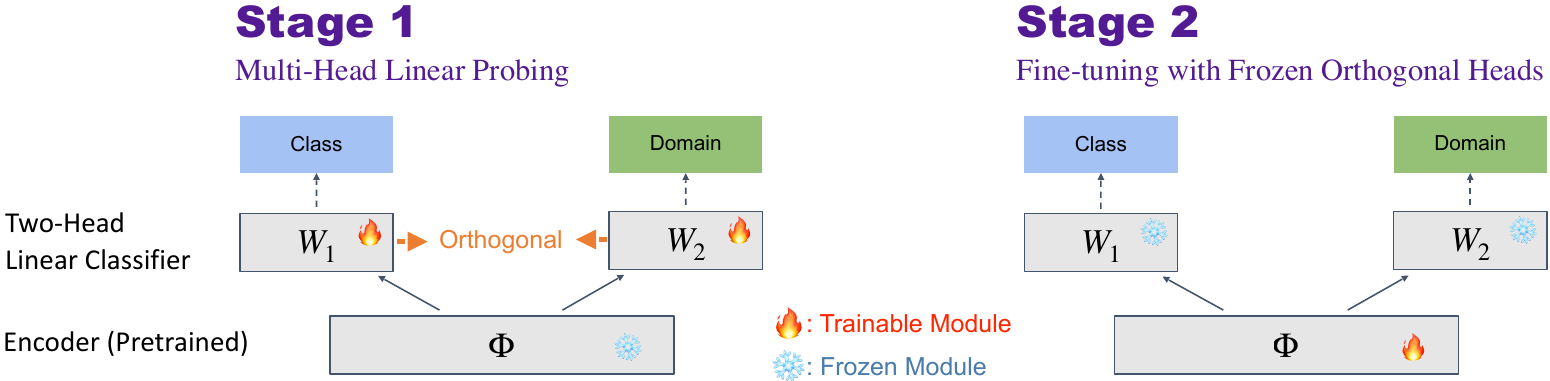}
    \caption{Illustration of our proposed method, Compositional Feature Alignment (CFA).}
    \label{fig:framework}
    \vspace{-1em}
\end{figure}

% \vspace{-0.5em}
\subsection{Method: Training with Frozen Orthogonal Heads under Normalization}\label{sec:algo:method}
Though we have defined an ideal feature structure for compositional generalization, the neural features produced by pretrained models may not align with this structure. In this section we introduce our method to encourage the learned features to follow the above structure. 
At a high level, the proposed method contains two stages of finetuning from pretrained models.
We include an auxiliary linear head for domain prediction, complementing the pre-existing class prediction head. The first stage involves multi-label linear probing, which adjusts the two heads to achieve an optimal compositional feature structure.  In the second stage, we fine-tune the encoder, while keeping the two heads frozen. The final product is a finetuned encoder that generates features in alignment with the predetermined compositional feature structure from stage one. The two stages, diagrammatically represented in Fig.~\ref{fig:framework}, are detailed below, followed by a discussion on our rationale behind the algorithm design.

\begin{stage}[Multi-Label Linear Probing]\label{stage-1}
We begin with a pretrained encoder, $\Phi(\cdot)$ that maps inputs to $d$-dimensional features of unit norm, where $d > K+E$). We construct two linear heads without bias terms, denoted by $W_1 \in \mathbb R^{K \times d}$ and $W_2 \in \mathbb R^{E \times d}$. Keeping $\Phi(\cdot)$ frozen, we train these heads with two cross-entropy loss terms, which take into account both class and domain labels. An orthogonality constraint ensures $W_1$ and $W_2$ span orthogonal subspaces.

Mathematically, the optimization objective of the first stage can be written as
\begin{align}\label{eq:LP}
    &\min_{W_1, W_2} \frac{1}{N} \sum_{(x,y,e)\in D_{\mathrm{train}}} \frac 1 K \ell_{\mathrm{CE}}(\beta_1 \cdot  W_1 \Phi(x), ~y) + \lambda \frac{1}{E}\ell_{\mathrm{CE}}(\beta_2 \cdot W_2 \Phi(x), ~e)\\
    &\mathrm{s.t. } ~ \beta_1, \beta_2,\lambda > 0 ~\mathrm{ and }~ W_1 \in \mathcal{U}(d)^K, W_2 \in \mathcal{U}(d)^{E}, W_1 W_2^\T = \mathbf{0}    \label{eq:LP:constraint}
\end{align}
where $D_{\mathrm{train}}$ represents the training set, $\ell_{\mathrm{CE}}$ is the cross-entropy loss, $\beta_1,\beta_2$ are inverse temperature parameters (also called as logit scale in CLIP \citep{CLIP}), $\mathcal{U}(d)$ denotes the set of $d$-dimensional unit vectors, and $\mathbf 0$ stands for the zero matrix.
\end{stage}

\begin{stage}[finetuning with Frozen Heads]\label{stage-2}
We then freeze the trained $W_1$ and $W_2$, and proceed to fine-tune the encoder $\Phi(\cdot)$ end-to-end, using the same multi-label cross-entropy loss function.

The optimization objective of this finetuning stage can be expressed as
\begin{align}\label{eq:FT}
    &\min_{\Phi} \frac{1}{N} \sum_{(x,y,e)\in \mathcal D} \frac 1 K \ell_{\mathrm{CE}}(\beta_1 \cdot W_1 \Phi(x), y) + \lambda \frac{1}{E}\ell_{\mathrm{CE}}(\beta_2 \cdot W_2 \Phi(x), e)
\end{align}

\end{stage}

The following discussion explains our motivations and reasoning underlying the algorithm design:

\begin{itemize}[leftmargin=*,topsep=0pt,itemsep=1pt,partopsep=1pt,parsep=1pt]
\item \textbf{\textit{Freezing Head for Feature Alignment}}: Recent work on neural collapse indicate that during the training of a multi-class classifier using cross-entropy loss, freezing the linear head according to a simplex structure can guide the features to align with the frozen head \citep{zhu2021geometric,yang2022inducing}. This observation implies that the features of data in class $y$ to collapse in the direction of the row vector of the classifier corresponding to class $y$. In addition, we empirically observe that the head-freezing technique does not compromise the model's performance compared to end-to-end finetuning. We include the details regarding this experiment in an ablation study in \Cref{supp:exp}. Inspired by these, we devise the two-stage strategy where the \Cref{stage-1} determines the optimal head weights for our compositional feature structure, and the \Cref{stage-2} finetunes the encoder with frozen head weights to align the features with the feature structure.\looseness=-1

\item \textbf{\textit{Linear Probing Two Orthogonal Heads}}:
Unlike research work on neural collapse that focus exclusively on class prediction \citep{neural-collapse,zhu2021geometric,yang2022inducing}, our work also accounts for the effects of domains, as outlined in \Cref{def:feature-structure}. We therefore introduce an auxiliary head, $W_2$, for domain prediction, alongside the original class prediction head denoted as $W_1$. Thus, for a sample $x$, the encoder, $W_1$, and $W_2$ predict the class and domain labels based on the feature $\Phi(x)$. 
\Cref{def:feature-structure} implicitly poses an orthogonality requirement on domain-related and class-related features since $R$ is an orthonormal matrix.
To meet this feature orthogonality requirement, we impose an orthogonality constraint on the two heads (i.e., $W_1 W_2^\T = \mathbf{0}$).

\item \textbf{\textit{Normalizing Features and Weights to Address Data Imbalance}}: 
While \citet{zhu2021geometric,yang2022inducing} provide a theoretical justification for head freezing, their theory assumes class-balanced training data. In the case of data imbalance, \citet{thrampoulidis2022imbalance} shows that the head and features may become misaligned. Upon reviewing the technical details of \citet{thrampoulidis2022imbalance}, we find that this misalignment can be rectified by normalizing features and head weights to a hyper-sphere. This normalization ensures constant norms for features and head weights, thereby ensuring alignment (cf.~\Cref{thm:feature-alignment}). Consequently, we assume that the features produced by the encoder $\Phi$ are also normalized to unit norm, which is a common practice in modern vision model pretraining such as CLIP \citep{CLIP}, SimCLR \citep{SimCLR}, MoCo \citep{MoCo}, DINO \citep{DINO}. Additionally, we impose the head normalization constraint $ W_1 \in \mathcal{U}(d)^K, W_2 \in \mathcal{U}(d)^{E}$ in \eqref{eq:LP:constraint}, a technique already employed in CLIP~\citep{CLIP}\footnote{Recent studies show that weight normalization for the linear head (without bias) can enhance the performance of fine-tuned CLIP \citep{FLYP,wang2023rethinking}.}. 

\end{itemize}

\subsection{Theoretical Guarantee}

In the algorithm above, \Cref{stage-1} is relatively simple, comprising a joint minimization problem over two linear heads. In contrast, \Cref{stage-2} is more complex, as it optimizes a neural encoder using two heads under two cross-entropy loss terms. We offer theoretical justification for \Cref{stage-2} below, demonstrating that the finetuned encoder can indeed align features with the two frozen orthogonal heads produced by \Cref{stage-1}, thereby creating a feature structure that meets the requirement in \Cref{def:feature-structure}.

In line with recent research on neural collapse \citep{mixon2020neural,fang2021layer-peel,zhu2021geometric,thrampoulidis2022imbalance}, we adopt the unconstrained feature model (UFM) or layer-peeled model, where $z=\Phi(x)$ is treated as a free optimization variable in $\bR^{d}$ for every input $x$. 

We denote $\mathbf{Z} = [\phi(x_1),\dots,\phi(x_n)]\in \bR^{d\times N}$, $\mathbf{Y} = [y_1,\dots,y_N]$, and $\mathbf{E} = [e_1,\dots,e_N]$ as the stack of features, class labels, and environment labels, respectively. In the context of the unconstrained feature model, the optimization objective of \Cref{stage-2} is transformed to:
\begin{align}\label{eq:UFM}
    \min_{\mathbf{Z}} \frac{1}{K N}\ell_{\mathrm{CE}}(\beta_1 \cdot W_1 \mathbf{Z}, \mathbf{Y}) + \lambda \frac{1}{E N}\ell_{\mathrm{CE}}(\beta_2 \cdot W_2 \mathbf{Z}, \mathbf{E}) \quad \mathrm{s.t.} \quad \mathbf{Z} \in \mathcal{U}(d)^N
\end{align}

\begin{theorem}[Feature Alignment]\label{thm:feature-alignment} Assuming the feature dimension $d$ is no smaller than $K+E$, and training data exists for each class and domain (though not necessarily for each domain-class combination), and $W_1$ and $W_2$ are normalized and span orthogonal subspaces such that $W_1 \in \mathcal{U}(d)^K, W_2 \in \mathcal{U}(d)^{E}$ and $W_1 W_2^\T = \mathbf{0}$. Additionally, we assume $\beta_1,\beta_2$ are sufficiently large. The global minimum of \eqref{eq:UFM} results in the following: for any $i\in [N]$, denote $z_i$ as the $i$-th column vector of $\mathbf{Z}$, we have
 \begin{align}
        z_i^* = W_1^\T \boldsymbol{a}_{y_i} + W_2^\T \boldsymbol{b}_{e_i}
    \end{align}
where $\boldsymbol{a}_{y_i}\in \bR^{K}$ is a vector depending on the class label $y_i$, and $\boldsymbol{b}_{e_i}\in \bR^{E}$ is a vector relying on the domain label $e_i$.
\end{theorem}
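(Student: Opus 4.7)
The key insight is that the UFM objective decouples across samples, and the orthogonality $W_1 W_2^\T = \mathbf{0}$ decouples the role of the two heads, so it suffices to analyse a single-sample problem on the unit sphere. Both the loss in \eqref{eq:UFM} and the constraint $\mathbf{Z}\in\mathcal{U}(d)^N$ factorise across columns, reducing the joint problem to $N$ independent sub-problems
\begin{equation*}
    \min_{z\in\bR^d,\,\|z\|=1}\; L_i(z) \;:=\; \tfrac{1}{K}\ell_{\mathrm{CE}}(\beta_1 W_1 z,\,y_i) \;+\; \tfrac{\lambda}{E}\ell_{\mathrm{CE}}(\beta_2 W_2 z,\,e_i),
\end{equation*}
one for each pair $(y_i,e_i)$, after which the overall global minimiser of \eqref{eq:UFM} is just the concatenation of these single-sample minimisers.

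Because $W_1 W_2^\T=\mathbf{0}$, the row spaces $\mathcal{R}_1:=\mathrm{row}(W_1)$ and $\mathcal{R}_2:=\mathrm{row}(W_2)$ are mutually orthogonal subspaces of $\bR^d$, so I can write $z=u+v+r$ uniquely with $u\in\mathcal{R}_1$, $v\in\mathcal{R}_2$, and $r\in(\mathcal{R}_1\oplus\mathcal{R}_2)^{\perp}$. Since $W_1 v = W_1 r = 0$ and $W_2 u = W_2 r = 0$, the loss $L_i$ depends on $z$ only through $(u,v)$, while the unit-norm constraint couples them through $\|u\|^2+\|v\|^2+\|r\|^2=1$. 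The central step is to show that the residual vanishes at any global minimum. Using the hypotheses $d\ge K+E$ and that every class and every domain has at least one sample, one can construct a feasible $z$ that is correctly classified by $W_1$ for $y_i$ and by $W_2$ for $e_i$ with a strictly positive margin (e.g.\ by placing $u$ along the $y_i$-th row of $W_1$ and $v$ along the $e_i$-th row of $W_2$, which is possible precisely because $\mathcal{R}_1\perp\mathcal{R}_2$ and the dimension budget is large enough). For sufficiently large $\beta_1,\beta_2$, the global minimiser of $L_i$ must also lie in this correctly-classified regime; and in that regime the map $c\mapsto \ell_{\mathrm{CE}}(c\cdot \ell, k)$ is strictly decreasing in $c>0$ whenever $k$ is the argmax of $\ell$. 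Hence if $r\ne 0$ one can rescale $(u,v)\mapsto(u,v)/\sqrt{1-\|r\|^2}$ to enforce $r=0$ while keeping $\|z\|=1$ and strictly lowering $L_i$, a contradiction.

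Once $r^{*}=0$, we have $z_i^{*}=u_i^{*}+v_i^{*}$ with $u_i^{*}\in\mathcal{R}_1$ and $v_i^{*}\in\mathcal{R}_2$; since the rows of $W_1$ span $\mathcal{R}_1$, there exist $\boldsymbol{a}\in\bR^K$ and $\boldsymbol{b}\in\bR^E$ with $u_i^{*}=W_1^\T\boldsymbol{a}$ and $v_i^{*}=W_2^\T\boldsymbol{b}$, and because the sub-problem for $z_i$ depends on $i$ only through $(y_i,e_i)$ we can index these solutions by the class and domain labels, giving the claimed $z_i^{*}=W_1^\T\boldsymbol{a}_{y_i}+W_2^\T\boldsymbol{b}_{e_i}$. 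The main difficulty I foresee is making the "logit scaling strictly decreases cross-entropy" step non-circular, since it presumes the optimum is already correctly classified. I would handle this by exhibiting the explicit feasible configuration above whose loss $\to 0$ as $\beta_1,\beta_2\to\infty$, and then invoking "sufficiently large $\beta_1,\beta_2$" to conclude that any global minimiser must attain an even smaller loss and therefore must lie in the correctly-classified regime before the monotonicity argument is applied.
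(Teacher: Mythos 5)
Your proposal is correct in its main thrust but follows a genuinely different route from the paper's. The paper works globally in each subspace: it invokes Theorem 1 of \citet{thrampoulidis2022imbalance} to show that the class-loss-only optimum satisfies $W_1\mathbf{Z}^*=\gamma\mathbf{S}_1$ for a simplex-encoding label matrix $\mathbf{S}_1$, derives the closed form $\mathbf{Z}^*=\gamma_1 V_1\Lambda_1^{-1}U_1^\T\mathbf{S}_1$ via the compact SVD of $W_1$ (analogously for the domain loss), and then argues that, because the two heads span orthogonal subspaces, the joint optimum is a normalized superposition of the two single-loss optima; this yields explicit formulas $\boldsymbol{a}_{y_i}=a\gamma_1 U_1\Lambda_1^2U_1^\T(\boldsymbol{e}_{y_i}-\tfrac1K\boldsymbol{1})$ and $\boldsymbol{b}_{e_i}=b\gamma_2 U_2\Lambda_2^2U_2^\T(\boldsymbol{e}_{e_i}-\tfrac1E\boldsymbol{1})$. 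You instead decouple the problem per sample, split $z=u+v+r$ along $\mathrm{row}(W_1)\oplus\mathrm{row}(W_2)\oplus(\cdot)^{\perp}$, and eliminate the residual by a rescaling argument in the correctly-classified regime. Your route is more elementary and self-contained (no external neural-collapse theorem), and your treatment of the ``sufficiently large $\beta_1,\beta_2$'' hypothesis --- exhibiting a feasible point whose loss tends to $0$ so that any global minimizer is forced into the correctly-classified regime, where logit scaling is strictly loss-decreasing --- is more careful than anything in the paper. What you lose is the explicit simplex geometry of the coefficients, which the paper's route delivers for free.

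Two soft spots, one of which you share with the paper. First, your ``strictly positive margin'' for the constructed feasible point requires that no other row of $W_1$ coincides with $W_1^{(y_i)}$ (and similarly for $W_2$); this non-degeneracy is not among the stated hypotheses and should be made explicit. Second, and more substantively: once $r^*=0$, the per-sample problem becomes $\min_{s^2+t^2=1}\, g_{y_i}(s)+h_{e_i}(t)$, where $g_y(s)$ is the best achievable class loss at radius $s$ inside $\mathrm{row}(W_1)$ and $h_e(t)$ is the domain analogue. The optimal \emph{directions} depend only on $y_i$ and $e_i$ respectively, but the optimal radius split $(s^*,t^*)$ is determined jointly by the pair $(y_i,e_i)$ unless all classes (resp.\ domains) are equally hard. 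Strictly, your argument therefore yields $z_i^*=W_1^\T\boldsymbol{a}_{y_i,e_i}+W_2^\T\boldsymbol{b}_{y_i,e_i}$ with only the directions label-separable, which is slightly weaker than the theorem's claim that $\boldsymbol{a}$ depends on $y_i$ alone. The paper's proof has the same gap --- it posits global scaling factors $a,b$ without justifying that the norm allocation is sample-independent --- so this is not a defect peculiar to your approach, but you should acknowledge it rather than assert single-label dependence of the coefficients.
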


This theorem, intuitively, demonstrates that upon optimizing \eqref{eq:UFM} to a global minimum, for any training sample from class $y$ in environment $e$, its corresponding feature $z^*$ can be decomposed as a linear combination of two vectors depending on $y$ and $e$, respectively, and the two vectors live in orthogonal feature subspaces. This indicates that the learned features conform to a compositional feature structure satisfying \Cref{def:feature-structure}. The complete proof is found in Appendix \ref{supp:theory}, where we leverage theoretical tools from \citet{thrampoulidis2022imbalance} in the proof.

% \vspace{-1em}
\section{Empirical Studies}\label{sec:exp}
% \vspace{-.5em}

\subsection{Benchmark Development of CG-Bench} \label{sec:benchmark}

We create CG-Bench, a compositional generalization benchmark built on four datasets previously designed for DG research: Office-Home \citep{Office-Home}, DomainNet \citep{DomainNet}, and iWildCam \citep{iWildCam} \& FMoW \citep{FMoW} from the WILDS benchmark \citep{WILDS}. These datasets span a wide range of application scenarios from common object recognition in web-crawled images to wildlife recognition from camera traps and building recognition from satellite images. Due to the page limit, we elaborate on the motivation for creating the CG-bench and the curation procedure by taking DomainNet as an example. Additional details regarding benchmark curation can be found in Appendix \ref{supp:benchmark}.

\textbf{DomainNet~\citep{DomainNet}} consists of objects in different art styles. In the DomainNet dataset, there are $K=345$ classes and $E=6$ domains: \{\texttt{Clipart}, \texttt{Infograph}, \texttt{Painting}, \texttt{Quickdraw}, \texttt{RealImage}, \texttt{Sketch}\}. In addressing the DG challenge, prior research using DomainNet typically employed a leave-one-out cross-validation strategy. This involves training the model on data from five of the domains and subsequently evaluating its performance on the sixth, omitted domain. In addition to the DG task, it's worth noting that the CG challenge is intrinsically present within DomainNet. To underscore this point, we carried out a preliminary experiment using CLIP.

\paragraph{CG Challenge in DomainNet}
We randomly divide DomianNet into training and evaluation sets, with an 80:20 split. A CLIP model is fully fine-tuned on this training data, and evaluated on validation data from all domain-class combination. We also gathered zero-shot accuracy of the CLIP model for comparison. As a final step, we examined the test accuracies for each domain-class combination, correlating them with the count of their respective training data samples. We only focus on \textit{hard} domain-class combinations that the zero-shot accuracy is below 30\%, and visualize evaluation results over these combinations in\Cref{fig:domainnet:longtail}. Firstly, we notice that certain domain-class combinations possess minimal or even no training samples (e.g., some combinations have only 2 images and neither of them is sampled into the training set). This observation aligns with our considered CG scenario. Within this CG context, both the fine-tuned and zero-shot models encounter difficulties in achieving high test accuracy when the training data for a specific domain-class combination is insufficient. This leads us to conclude that the CG challenge is inherently present in DomainNet, and current zero-shot and fine-tuned models fail to address. Consequently, we are motivated to establish a benchmark for a methodical investigation of this challenge.

\paragraph{Benchmark Curation Setup}
Consider data belonging to $E$ domains (i.e., environments) and $K$ classes, resulting in an $E \times K$ matrix of domain-class combinations (such as the demo shown in Fig. \ref{fig:CG-illustration}). A binary mask $M_{\mathrm{id}}\in \{0,1\}^{E\times K}$ is applied to this matrix to indicate \textit{in-distribution} domain-class combinations, ensuring that each row or column contains both $0$ and $1$ so that the training data includes all classes and domains, while some domain-class combinations may be absent. The complementary binary mask ($M_{\mathrm{ood}} = 1 - M_{\mathrm{id}}$) represents OOD domain-class combinations.

\paragraph{CG Curation of DomainNet}
We form a $E\times K$ class-environment matrix for DomainNet and evaluate the zero-shot accuracy for every domain-class combination. The resulting data distribution is visualized in \Cref{fig:domainnet:hist}. We designate the combinations that fall within the lowest 20\% of zero-shot accuracies as the out-of-distribution (OOD) set, while the top 80\% constitute the in-distribution (ID) set. To ensure comprehensive representation, each row/column contains at least one entry from the ID set. The ID data is then further segregated into a training set and an ID validation set at a 9:1 ratio. Meanwhile, the OOD data is divided between OOD validation and test sets. When evaluating a model trained on the main training dataset, we assess its performance across the ID validation, OOD validation, and OOD test subsets. For this benchmark, our key metric is the average top-1 accuracy for both the ID and OOD sets.

\begin{figure}[!t]
    \vspace{-1em}
    \centering
    \subfloat[Test Accuracy vs. Training Data Size]{%
        \includegraphics[width=0.48\textwidth]{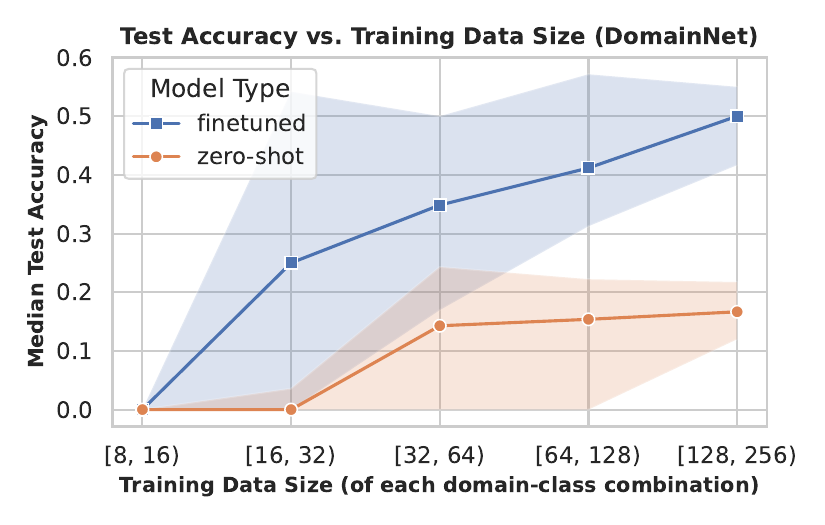}
        \label{fig:domainnet:longtail}
    }
    \hfill
    \subfloat[Zero-shot Test Accuracy Distribution]{%
        \includegraphics[width=0.48\textwidth]{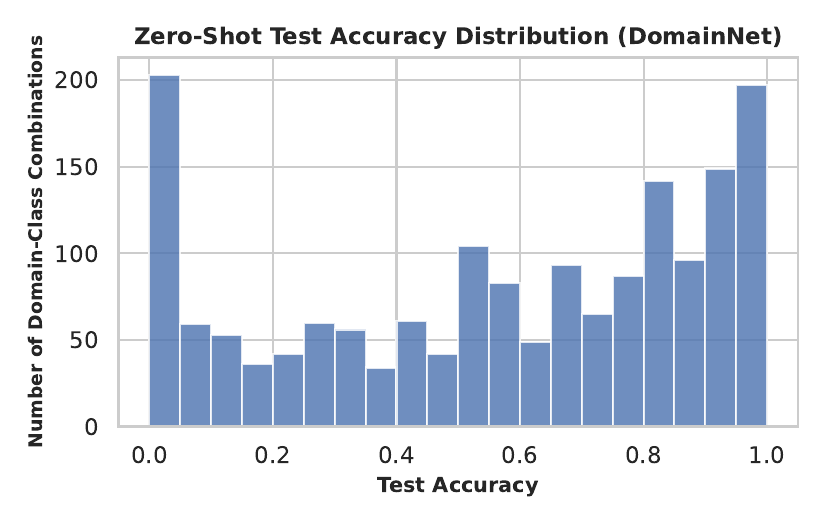}
        \label{fig:domainnet:hist}
    }
    \caption{Test accuracy statistics for each domain-class combination on DomainNet dataset. \textbf{Left}: The test accuracy compared with the number of training data. Points are median accuracy while the shading area is bounded by 25\% and 75\% quantiles. \textbf{Right}: The number of domain-class combinations at different zero-shot test accuracies.}
    \label{fig:motivation}
    % \vspace{-2em}
\end{figure}

\subsection{Evaluations}

% Please add the following required packages to your document preamble:
% \usepackage{multirow}
% \usepackage[table,xcdraw]{xcolor}
% If you use beamer only pass "xcolor=table" option, i.e. \documentclass[xcolor=table]{beamer}
\begin{table}[]
% \vspace{-2.5em}
\caption{Test accuracy(\%) and F1-macro scores (\%) of different methods on CG-Bench. Bold values mark the highest accuracy or F1 score. The OOD accuracy of FMoW is the worst region accuracy. The highest accuracy values and those within a range of 0.2 are in \textbf{bold}.}
\resizebox{\textwidth}{!}{
\renewcommand{\arraystretch}{1.2}
\begin{tabular}{c|l|cc|cc|cccc|cc}
\hline
 & & \multicolumn{2}{c|}{OfficeHome}   & \multicolumn{2}{c|}{DomainNet}   & \multicolumn{4}{c|}{iWildCam} & \multicolumn{2}{c}{FMoW} \\ \cline{3-12} 
\multirow{-2}{*}{Model}  & \multirow{-2}{*}{Methods} & ID Acc & OOD Acc   & ID Acc & OOD Acc & ID Acc & OOD Acc   & ID F1&OOD F1& ID Acc & OOD Acc   \\ \hline
 & Zero-Shot   & 89.2 & 50.3 & 61.7 & 6.6 & 13.7 & 6.9  & 11.7 & 9.2  & 20.4 & 18.8 \\
 & Linear Probing  & 90.9 & 41.0 & 72.7 & 4.7 & 72.5 & 14.4 & 42.1 & 22.5 & 37.7 & 27.6 \\ \cline{2-12} 
 & Fine-Tuning  & \textbf{94.3}   & 51.0 & \textbf{82.0}   & 7.5 & 74.5 & 16.5 & 43.8 & 22.2 & \textbf{65.8} & 38.7 \\
 & Fine-Tuning (WiSE)  & 93.7 & 52.5 & 76.4 & 8.7 & 67.0& 13.7 & 31.6 & 17.0 & 49.5 & 40.6 \\ \cline{2-12} 
 & LP-FT & 93.5 & 43.9 & 81.5 & 5.3 & 74.0& 17.0& 42.5 & 26.6 & \textbf{65.9} & 40.2 \\
 & LP-FT (WiSE) & 93.0  & 42.8 & 79.4 & 5.3 & 74.4 & 18.2 & 44.5 & 28.7 & 56.6 & 36.3 \\ \cline{2-12} 
 & Reweight-E & 94.0  & 51.9 & 81.2 & 7.4 & \textbf{75.3} & 17.2 & 45.2 & 24.3 & 62.4 & \textbf{41.8} \\
 & Reweight-E (WiSE)   & 93.6 & 53.1 & 75.9 & 8.5 & 68.5 & 13.7 & 32.9 & 18.0& 46.8 & \textbf{41.7} \\ \cline{2-12}
 & Reweight-Y$\times$E & 93.7 & 52.2 & 81.0  & 7.6 & 72.2 & 17.4 & 41.6 & 30.0& 58.0& 41.1 \\
 & Reweight-Y$\times$E (WiSE)   & 93.4 & 53.4 & 75.5 & 8.5 & 55.9 & 15.1 & 29 & 22.7 & 42.1 & 37.5 \\ \cline{2-12} 
 & \cellcolor[HTML]{C0C0C0}CFA & \cellcolor[HTML]{C0C0C0}\textbf{94.3} & \cellcolor[HTML]{C0C0C0}54.3 & \cellcolor[HTML]{C0C0C0}81.6 & \cellcolor[HTML]{C0C0C0}7.3 & \cellcolor[HTML]{C0C0C0}74.0& \cellcolor[HTML]{C0C0C0}18.3 & \cellcolor[HTML]{C0C0C0}43.6 & \cellcolor[HTML]{C0C0C0}31.0& \cellcolor[HTML]{C0C0C0}65.3 & \cellcolor[HTML]{C0C0C0}\textbf{41.6} \\
\multirow{-12}{*}{CLIP}  & \cellcolor[HTML]{C0C0C0}CFA (WiSE) & \cellcolor[HTML]{C0C0C0}93.1 & \cellcolor[HTML]{C0C0C0}\textbf{56.9} & \cellcolor[HTML]{C0C0C0}76.5 & \cellcolor[HTML]{C0C0C0}\textbf{9.2} & \cellcolor[HTML]{C0C0C0}74.6 & \cellcolor[HTML]{C0C0C0}\textbf{19.7} & \cellcolor[HTML]{C0C0C0}\textbf{45.6} & \cellcolor[HTML]{C0C0C0}\textbf{32.5} & \cellcolor[HTML]{C0C0C0}53.5 & \cellcolor[HTML]{C0C0C0}36.6 \\ \hline \hline
 & Fine-Tuning  & 91.8 & 38.6 & \textbf{82.4} & 5.3 & 76.4 & 14.4 & 47.6 & 18.3 & 66.1 & \textbf{38.4} \\ 
 & Linear Probing  & \textbf{93.3}   & 40.0 & 75.4 & 4.8 & 77.0 & 19.6 & 50.7 & 27.9 & 45.5 & 25.5 \\ \cline{2-12} 
 & LP-FT & 93.1 & 38.2 & \textbf{82.5} & 5.1 & 77.6 & 23.1 & 52.8 & 30.8 & \textbf{67.1} & 37.4 \\
 & LP-FT (WiSE) & \textbf{94.0}   & 39.7 & 81.6 & 6.2 & 77.9 & 22.3 & \textbf{53.2} & 31.0& 61.0& 33.7 \\ \cline{2-12} 
 & Reweight-E & 91.2 & 38.9 & 81.8 & 5.2 & 76.9 & 13.1  & 48 & 17.9 & 62.3 & \textbf{38.4} \\ \cline{2-12}
 & Reweight-Y$\times$E & 91.3 & 39.0 & 81.5 & 5.3 & 72.2 & 17.4 & 41.6 & 30.0 & 57.5 & 37.6 \\ \cline{2-12} 
 & \cellcolor[HTML]{C0C0C0}CFA & \cellcolor[HTML]{C0C0C0}92.8 & \cellcolor[HTML]{C0C0C0}39.2 & \cellcolor[HTML]{C0C0C0}\textbf{82.6} & \cellcolor[HTML]{C0C0C0}5.6 & \cellcolor[HTML]{C0C0C0}\textbf{78.1} & \cellcolor[HTML]{C0C0C0}22.8 & \cellcolor[HTML]{C0C0C0}52.5 & \cellcolor[HTML]{C0C0C0}30.8 & \cellcolor[HTML]{C0C0C0}\textbf{67.2} & \cellcolor[HTML]{C0C0C0}\textbf{38.5} \\
\multirow{-8}{*}{DINOv2} & \cellcolor[HTML]{C0C0C0}CFA (WiSE) & \cellcolor[HTML]{C0C0C0}\textbf{93.1} & \cellcolor[HTML]{C0C0C0}\textbf{40.4} & \cellcolor[HTML]{C0C0C0}79.6 & \cellcolor[HTML]{C0C0C0}\textbf{6.4} & \cellcolor[HTML]{C0C0C0}\textbf{78.2} & \cellcolor[HTML]{C0C0C0}\textbf{23.8} & \cellcolor[HTML]{C0C0C0}52.6 & \cellcolor[HTML]{C0C0C0}\textbf{33.4} & \cellcolor[HTML]{C0C0C0}59.8 & \cellcolor[HTML]{C0C0C0}34.3  
\\ \hline
\end{tabular}
}
% \vspace{-2em}
\label{tab:main}
\end{table}

\textbf{Baselines}~We take the OpenAI's ViT-B/16 CLIP \citep{CLIP} and the Meta's ViT-B/14 DINOv2 \citep{DINOv2} as the pretrained model for each benchmark and implement three finetuning strategies as the baseline: \textbf{full finetuning}, \textbf{linear probing then full finetuning} (\textit{LP-FT}) and \textbf{reweighting}. LP-FT \citep{LP-FT} is a simple two-stage finetuning strategy that addresses the problem that full finetuning can distort pretrained features and often underperforms linear-probing on out-of-distribution data. \textit{Note:} Our proposed approach bears similarities with LP-FT \citep{LP-FT}, as both methods start with a linear probing stage and then proceed with finetuning. However, two critical differences are: i) our approach employs two heads with a multi-label cross-entropy loss and imposes an orthogonality constraint during the linear probing stage, and ii) we keep the heads frozen during the second finetuning stage. \textit{Reweighting} \citep{Reweight} balances the number of samples from each group in each batch during training. It is robust to group shifts in OOD generalization tasks. For the CG benchmarks, we implemented two versions of reweighting strategies: Reweight-E, which does re-sampling according to the domain labels, and Reweight-Y$\times$E, which balances according to the domain-class combinations.

\paragraph{Postprocessing with WiSE-FT \citep{wortsman2022robust}}
After finetuning using the three baseline methods and our proposed CFA, we also apply WiSE-FT \citep{wortsman2022robust} with $\alpha=0.5$ to postprocess the model, which just takes an average of initial and finetuned model parameters in the parameter space. It has been shown that Wise-FT can improve the model performance (especially OOD performance) in some cases \citep{wortsman2022robust}. \textit{Note:} For the CLIP experiments, we interpolate the finetuned model with the zero-shot CLIP encoder and classification head. For the DINOv2 experiments, since there is no zero-shot classification head available, we interpolate the finetuned models with the linear probing/stage-1 CFA results. Consequently, we do not perform WiSE-FT on full finetuning and reweighting baselines since they do have linear-probed heads.

\paragraph{Implementation of CFA}
Empirically, we make small modifications to \Cref{stage-1} to divide it into two steps: i) Train $W_2$ on the domain labels with reweighting until it converges with $W_1$ fixed to its zero-shot weight; ii) then train $W_1$ on the class label with reweighting, and encourage the orthogonality between $W_1$ and the fixed $W_2$ with a $\ell_2$ regularization term, $\|W_1^\T W_2\|_F^2$. We also use reweighting for the class labels when performing the linear probing for LP-FT for a fair comparison.
Following Sec. \ref{sec:algo:method}, we normalize the row vectors of $W_1$ and $W_2$ to the unit norm and normalize the outputs of $\Phi$ also to the unit norm. 
In addition, in the linear probing stage of CFA and LP-FT, we constrain $W_1$ and $W_2$ to a subspace determined by the zero-shot linear classifier of CLIP, as we find it can improve the final OOD performance. Besides, we find that in \Cref{stage-2}, it is empirically sufficient to train the encoder with a very small $\lambda$ value, which is a loss coefficient in \eqref{eq:FT}, and we deploy $\lambda = 0$ in \Cref{stage-2} to reduce compute cost. In both \Cref{stage-1} and \Cref{stage-2}, we use the AdamW~\citep{adamw} optimizer with a cosine annealing scheduler \citep{cosineannealing}. More details and hyperparameters can be found in Appendix \ref{supp:exp}. \looseness=-1

\paragraph{Empirical Conclusions}
The results of our empirical experiments are presented in \Cref{tab:main}. Observing the results, we conclude that: \textbf{a)} Compared with full finetuning, LP-FT and reweighting, our CFA can improve the performance of pretrained models on OOD data for compositional generalization; \textbf{b)} WiSE-FT can further improve the OOD performance of all methods in most cases (when WiSE-FT fails, it will fail on both ID and OOD); \textbf{c)} While CFA enjoys a superior OOD performance, its in-distribution (ID) performance is maintained around the same level of full finetuning, which is a desired property. Also, we notice that, although our CFA increases the performance of models on OOD data in CG tasks, there is still a gap between its ID and OOD performance, indicating that CG is quite a challenging task and needs future algorithm advancement to be further addressed.

\vspace{-0.5em}
\subsection{Feature Visualization on Real-World Data}
\vspace{-0.5em}

\begin{figure}[t]
    \vspace{-1.5em}
    \centering
    \subfloat[Pretrained Features]{%
        \includegraphics[width=0.48\textwidth]{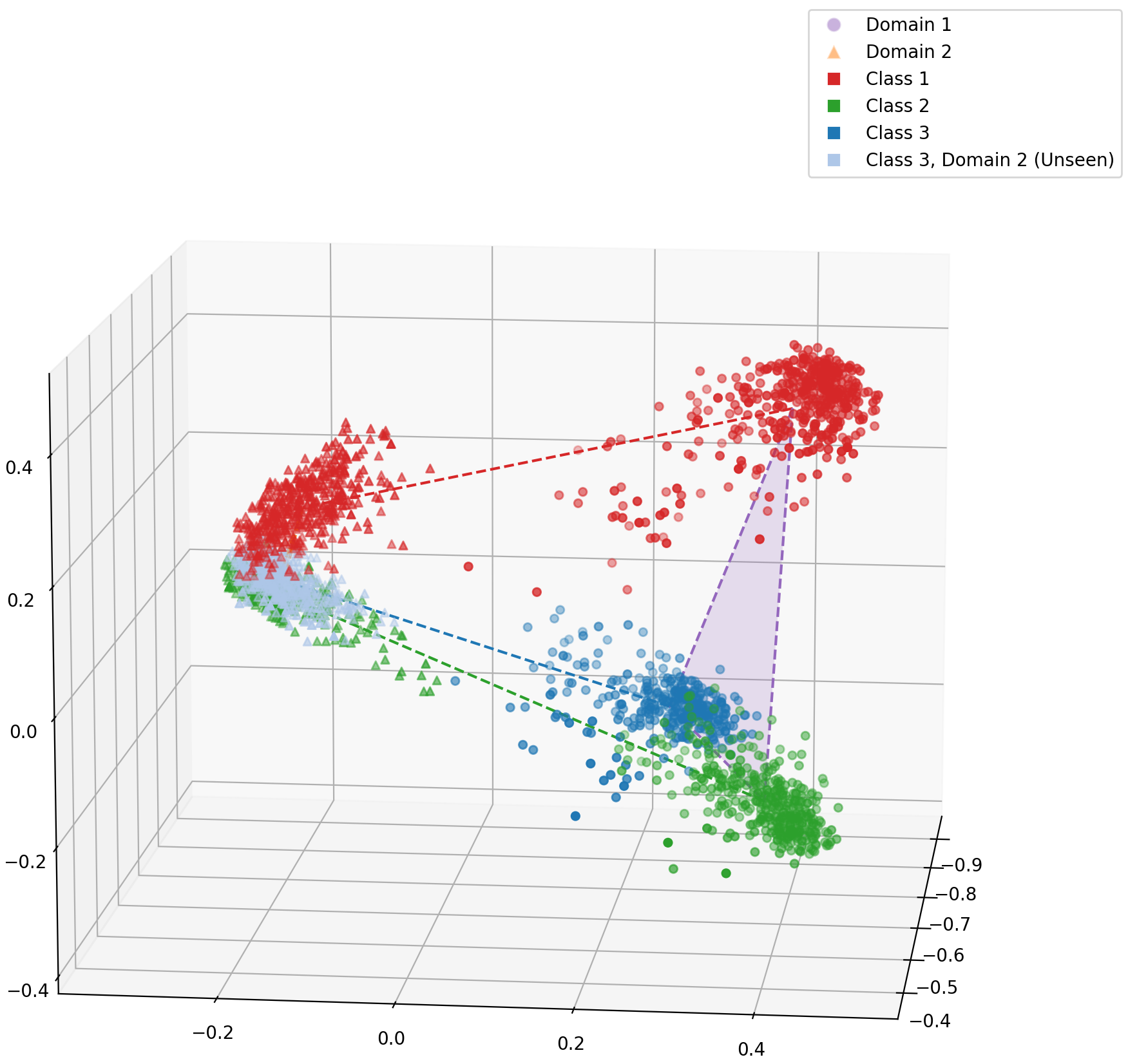}
        \label{fig:DomainNet-features:clip}
    }
    ~
    \subfloat[CFA-Finetuned Features]{%
        \includegraphics[width=0.48\textwidth]{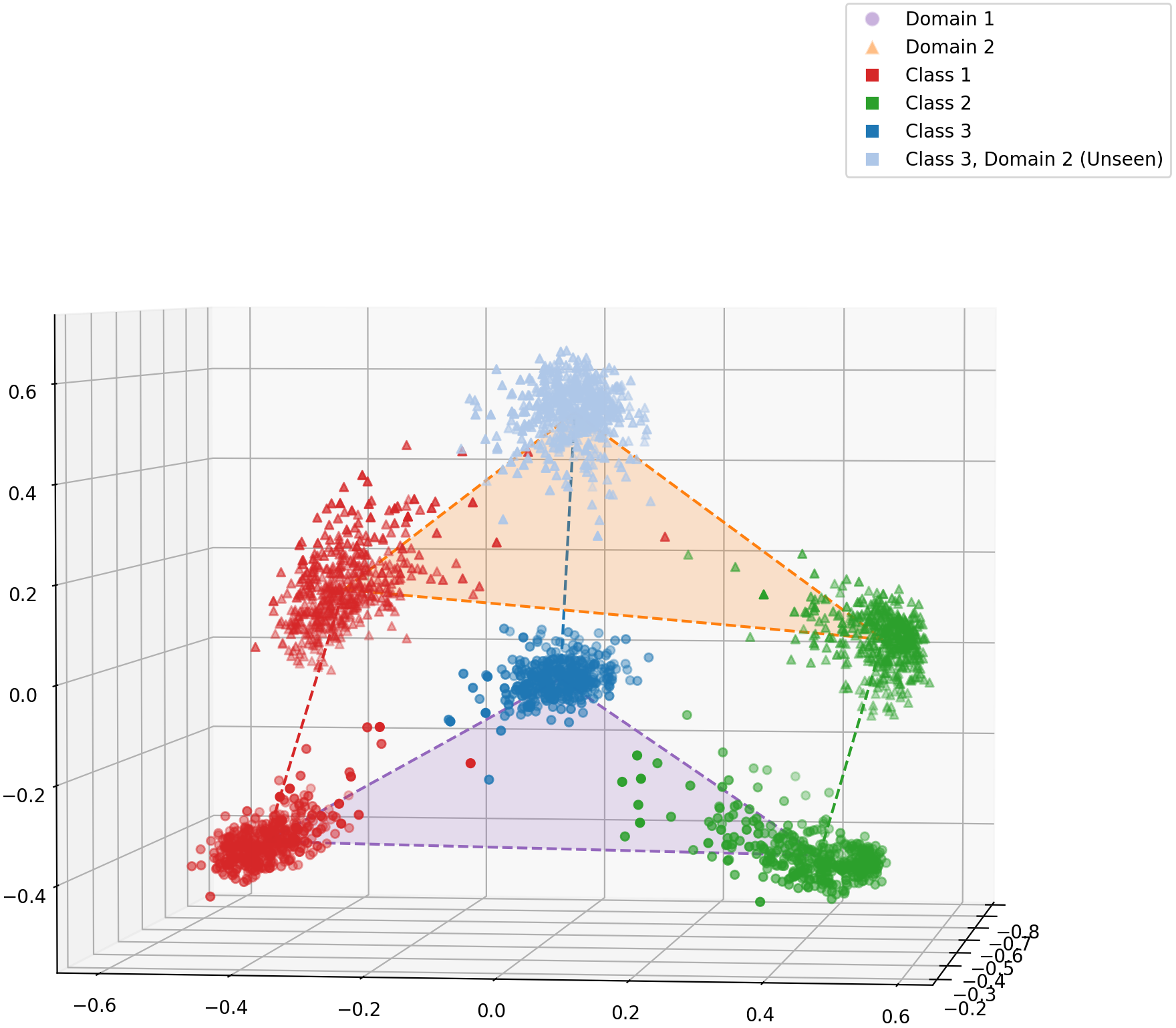}
        \label{fig:DomainNet-features:CFA}
    }
    \caption{Visualization of the features for CLIP ViT-B/16 before and after the finetunning of CFA. \textbf{Left}: Feature extracted using pre-trained CLIP ViT-B/16 image encoder. \textbf{Right}: Feature extracted using CFA-finetuned CLIP ViT-B/16 image encoder.}
    \label{fig:DomainNet-features}
    % \vspace{-2em}
\end{figure}

To empirically show that our two-stage algorithm promotes the encoder's ability to learn the compositional feature, we conduct a feature visualization study for the CLIP ViT-B/16 image encoder on the DomainNet \citep{DomainNet} dataset. Specifically, we take the encoder both \textit{before} and \textit{after} finetuning with CFA, visualizing features for 2 domains and 3 classes. This resulted in 6 unique domain-class combinations (5 out of which are present in the training set). The visualization in \Cref{fig:DomainNet-features:CFA}~clearly shows that the features finetuned with CFA conform to a compositional feature structure. In contrast, the features of the original CLIP model (\Cref{fig:DomainNet-features:clip}) do not exhibit this structure. This visualization not only demonstrates the feature alignment ability of CFA but also provides further evidence of its effectiveness for large neural networks trained on real-world data.

\subsection{Partial Availability of Domain Labels}

In real-world applications, training samples may not be provided with domain labels, which may affect the applicability of CFA. We divide this problem into two scenarios: i) Domain labels are partially available; ii) Domain labels are completely unavailable.

Corresponding experiments on the Office-Home dataset are designed and conducted to demonstrate that the CFA remains effective even in the absence of domain labels. Under the first condition, we conduct experiments by only using 10\%, 20\%, and 50\% of the domain labels for Stage-1 to learn the linear head for finetuning in Stage-2. We repeat each experiment 3 times using different subsets of the data and present the average result to mitigate the randomness caused by subsampling. For the second scenario, we propose that we can leverage the zeroshot ability of CLIP to predict the domain labels. To be specific, we first manually design a set of labels that can describe the domains of the data (for the experiment we just use the original domain names). Then, we perform the zeroshot classification on the training data and the domain labels using CLIP model. Finally, we take the predicted domain labels as ground-truth domain labels and use them in Stage-1. We adopt the hyperparameters shown in the manuscript and present the mean accuracy of CFA and WiSE-FT over 3 seeds on the Office-Home dataset. From results in Table \ref{table:partial-domain-labels}, we can see our CFA works well as domain labels are partially available.

\begin{table}[t]
% \vspace{-2.5em}
\caption{CFA with partial availability of domain labels. Experiments are conducted on Offce-Home.}\label{table:partial-domain-labels}
\vspace{-0.5em}
\centering
\resizebox{0.7\textwidth}{!}{
\begin{tabular}{l|l|cccc}
\textbf{Method}      & \textbf{Domain label ratio} & \textbf{ID} & \textbf{ID (WiSE)} & \textbf{OOD} & \textbf{OOD (WiSE)} \\\hline
\multirow{5}{*}{CFA} & 100\% (Original)            & 94          & 93.1               & 54.3         & 56.9                \\
                     & 50\%                        & 94          & 93.3               & 53.8         & 56.9                \\
                     & 20\%                        & 93.9        & 93.1               & 53.8         & 56.7                \\
                     & 10\%                        & 94.0          & 93.2               & 53.4         & 55.9                \\
                     & 0\% (CLIP Predict)          & 94.1        & 93.0                 & 52.0           & 53.6                \\\hline
Finetune             & 0\%                         & 94.3        & 93.7               & 51           & 52.5                \\\hline
LP-FT                & 0\%                         & 93.5        & 93.0               & 43.9         & 42.8               
\end{tabular}
}
% \vspace{-0.5em}
\end{table}

The mild reliance of CFA's performance on domain label availability is beneficial for the practical application of this method but may raise questions about why the reliance is so mild. We believe there are two main reasons: i) the number of domains is small (4-6 domains for each 4 dataset of CG-Bench), so the available data per domain is relatively abundant, and ii) domain labels are easier to predict than class labels since image style or background information is highly indicative of the domain label, and these visual features are easy to capture by neural networks.

We clarify this by conducting a simple ablation study: in the Office-Home dataset with 4 domains, we fit linear classifiers to CLIP features for different amounts of domain labels (1\%, 2\%,..., 100\% of the training data), and show the test prediction accuracy for domains below. Each experiment is repeated three times with different sampling seeds, and mean accuracy is reported in Table \ref{table:domain-label-prediction}.

\begin{table}[t]
\vspace{-0.5em}
\caption{Domain label prediction with partial training label availability. Experiments are conducted on Office-Home.}\label{table:domain-label-prediction}
\vspace{-0.75em}
\centering
\resizebox{0.7\textwidth}{!}{
\begin{tabular}{l|lllll}
Domain Label Ratio     & 0\% (Zero-Shot) & 10\% & 20\% & 50\% & 100\% \\\hline
Avg. \#Data per Domain & 0               & 308  & 616  & 1541 & 3082  \\
Accuracy               & 61.5            & 82.0 & 84.6 & 85.5 & 86.3 
\end{tabular}
}
\vspace{-1.5em}
\end{table}
From the results, we can observe that as the domain labeling ratio increases from 10\% to 100\%, the domain prediction accuracy modestly improves from 82.0\% to 86.3\%, indicating a relatively small enhancement. This suggests that the domain label is indeed quite easy to predict, which could explain why our CFA can work well with a partial availability of domain labels. Furthermore, the zero-shot prediction accuracy of the CLIP ViT-B/16 model for domain labels is 61.5\%, significantly higher than a random guess (25\%). This outcome supports the notion that CFA with CLIP-predicted domain labels can also improve over vanilla finetuning.

\paragraph{Class Label Availability} As for class labels, we do require their full availability for the training set, since our work focuses on \textit{supervised finetuning} of pretrained encoders. Meanwhile, although CLIP is not explicitly supervised by domain and class labels, the texts in the image-text pairs (used for CLIP pretraining) provide abundant class and environment information. Furthermore, CLIP's unsatisfactory zero-shot performance on our CG-bench indicates the need for supervised finetuning to improve its effectiveness. On the other hand, DINOv2 is a self-supervised pretraining method, thus it needs to be supervised finetuned before applying to downstream classification tasks. Overall, the above statement wants to justify that the class label availability cannot be waived for the compositional generalization task considered in our paper.

\textbf{Ablation Studies}~ We conducted ablation studies on i) the choice of normalized head v.s. unconstrained head, ii) the choice of frozen head v.s. trainable head, and iii) the loss coefficient $\lambda$ in \Cref{stage-2} on CLIP model. Due to the page limit, we defer results and details to \Cref{supp:exp}. Also, we discuss the training stability, overhead and performance gain of CFA in \Cref{supp:exp}.

\section{Related Works}

\paragraph{OOD Generalization}~In OOD generalization, the model is trained on labeled data from a limited number of known domains, and the goal is to improve the performance of the models so that they can better generalize to previously unseen or new test domains~\citep{blanchard2011generalizing}. A common approach to tackle OOD generalization is domain-invariant learning, which aims to learn a representation of data that has an invariant distribution over different training domains. Previous works taking this approach match domains in feature space either by aligning moments \citep{sun2016deep} or using an adversarial loss~\citep{ganin2016domain}. However, these methods are later pointed out to be generally inadequate~\citep{zhao2019domain}. Another popular approach is to learn the optimal invariant predictors. Taking this approach, an invariant risk minimizer (IRM) optimizes a highly non-convex bi-level objective and simplifies the optimization using a penalty regularized objective. However,~\citet{risks-of-IRM, kamath2021does, ahuja2021empirical} theoretically show that these algorithms fail even in simple data models. Similarly,~\citet{ISR} proposed Invariant-feature Subspace Recovery (ISR), which recovers the subspace spanned by the invariant features, and then fits predictors in this subspace. Distributionally robust optimization~\citep{GroupDRO} is also a choice to tackle OOD generalization, which optimizes models over a worst-case distribution that is perturbed around the original distribution. In addition to the methods designed for training a model from scratch, recent works~\citep{LP-FT, wortsman2022robust, FLYP} also discuss increasing the OOD accuracy over the pretrained model. While these methods provide impressive empirical improvements on pretrained models, theoretical explanations are yet to be provided.\looseness=-1

\paragraph{Composition Generalization}~
In the computer vision literature, previous research has investigated attribute-object compositions (also referred to as compositional zero-shot learning) ~\citep{misra2017red,nagarajan2018attributes,purushwalkam2019task,naeem2021learning,nayak2023learning,hao2023learning}, with the goal of predicting attributes of an object in addition to its class. For instance, in a binary image classification task involving \texttt{cat} versus \texttt{tiger}, a classifier might be required to predict whether the animal is \texttt{old} or \texttt{young} alongside the conventional class prediction. In contrast, compositional generalization (CG) has a different focus. It concentrates purely on predicting the object class (e.g., \texttt{cat} versus \texttt{tiger}). Specifically, CG tasks aim to accurately identify \texttt{young cat} images as \texttt{cat}, even when the training data consisting only of \texttt{young tiger} and \texttt{old cat} instances and lacking \texttt{young cat} images. This scenario introduces an out-of-distribution (OOD) shift, and the challenge lies in developing models robust to such shifts. While compositional zero-shot learning (CZSL) can decouple attributes from objects, it is not universally adaptable for OOD generalization tasks. This approach relies on powerful vision-language models (VLMs) such as CLIP, while CG does limit the type of image classifiers. However, in certain real-world domains, such as remote sensing or medical imaging, there is a lack of paired image-text data to train strong VLMs. Therefore, adopting \textit{self-supervised} models such as MAE \citep{MAE} and DINO \citep{DINO} presents a more practical strategy for these domains~\citep{cong2022satmae,wanyan2023dino}. As shown in Table \ref{tab:main}, our CFA can work with both VLMs and self-supervised models. In contrast, CZSL can not be directly applied to self-supervised models. Besides, \citet{sivaprasad2022class} explores CG under a slightly simplified premise, where only one random domain is masked for each class. In addition, their method is built upon ResNet \citep{resnet} and does not scale well with modern transformer architectures. \looseness=-1

\section{Conclusion}

This paper delves into the challenge of Compositional Generalization (CG) in machine learning, focusing on generalization to unseen domain-class combinations. By developing CG-Bench, a suite of benchmarks from real-world image datasets, we highlighted the shortcomings of prevalent pretraining-finetuning frameworks in tackling this challenge. Our proposed solution, the Compositional Feature Alignment (CFA), offers a promising approach to improve the CG performance of pretrained models, as evidenced in our experiments. Despite these advances, our study is not without limitations. Our experiments are currently limited to the base-sized ViT models, and our empirical studies draw from a restricted number of datasets of limited size. As we strive to overcome these limitations in future work, we look to include larger models and diversify our benchmark suite, exploring alternative data sources beyond images. We invite the broader machine learning community to join us in the ongoing exploration of the important challenge of CG.
\newpage
\bibliography{reference}

\begin{thebibliography}{55}
\providecommand{\natexlab}[1]{#1}
\providecommand{\url}[1]{\texttt{#1}}
\expandafter\ifx\csname urlstyle\endcsname\relax
  \providecommand{\doi}[1]{doi: #1}\else
  \providecommand{\doi}{doi: \begingroup \urlstyle{rm}\Url}\fi

\bibitem[Ahuja et~al.(2021)Ahuja, Wang, Dhurandhar, Shanmugam, and Varshney]{ahuja2021empirical}
Kartik Ahuja, Jun Wang, Amit Dhurandhar, Karthikeyan Shanmugam, and Kush~R. Varshney.
\newblock Empirical or invariant risk minimization? a sample complexity perspective.
\newblock In \emph{International Conference on Learning Representations}, 2021.
\newblock URL \url{https://openreview.net/forum?id=jrA5GAccy_}.

\bibitem[Arjovsky et~al.(2019)Arjovsky, Bottou, Gulrajani, and Lopez-Paz]{IRM}
Martin Arjovsky, L{\'e}on Bottou, Ishaan Gulrajani, and David Lopez-Paz.
\newblock Invariant risk minimization.
\newblock \emph{arXiv preprint arXiv:1907.02893}, 2019.

\bibitem[Beery et~al.(2020)Beery, Cole, and Gjoka]{iWildCam}
Sara Beery, Elijah Cole, and Arvi Gjoka.
\newblock The iwildcam 2020 competition dataset.
\newblock \emph{arXiv preprint arXiv:2004.10340}, 2020.

\bibitem[Blanchard et~al.(2011)Blanchard, Lee, and Scott]{blanchard2011generalizing}
Gilles Blanchard, Gyemin Lee, and Clayton Scott.
\newblock Generalizing from several related classification tasks to a new unlabeled sample.
\newblock \emph{Advances in neural information processing systems}, 24:\penalty0 2178--2186, 2011.

\bibitem[Brown et~al.(2020)Brown, Mann, Ryder, Subbiah, Kaplan, Dhariwal, Neelakantan, Shyam, Sastry, Askell, et~al.]{GPT-3}
Tom Brown, Benjamin Mann, Nick Ryder, Melanie Subbiah, Jared~D Kaplan, Prafulla Dhariwal, Arvind Neelakantan, Pranav Shyam, Girish Sastry, Amanda Askell, et~al.
\newblock Language models are few-shot learners.
\newblock \emph{Advances in neural information processing systems}, 33:\penalty0 1877--1901, 2020.

\bibitem[Buda et~al.(2018)Buda, Maki, and Mazurowski]{Reweight}
Mateusz Buda, Atsuto Maki, and Maciej~A Mazurowski.
\newblock A systematic study of the class imbalance problem in convolutional neural networks.
\newblock \emph{Neural networks}, 106:\penalty0 249--259, 2018.

\bibitem[Caron et~al.(2021)Caron, Touvron, Misra, J{\'e}gou, Mairal, Bojanowski, and Joulin]{DINO}
Mathilde Caron, Hugo Touvron, Ishan Misra, Herv{\'e} J{\'e}gou, Julien Mairal, Piotr Bojanowski, and Armand Joulin.
\newblock Emerging properties in self-supervised vision transformers.
\newblock In \emph{Proceedings of the IEEE/CVF international conference on computer vision}, pp.\  9650--9660, 2021.

\bibitem[Chen et~al.(2020)Chen, Kornblith, Norouzi, and Hinton]{SimCLR}
Ting Chen, Simon Kornblith, Mohammad Norouzi, and Geoffrey Hinton.
\newblock A simple framework for contrastive learning of visual representations.
\newblock In \emph{International conference on machine learning}, pp.\  1597--1607. PMLR, 2020.

\bibitem[Christie et~al.(2018)Christie, Fendley, Wilson, and Mukherjee]{FMoW}
Gordon Christie, Neil Fendley, James Wilson, and Ryan Mukherjee.
\newblock Functional map of the world.
\newblock In \emph{Proceedings of the IEEE Conference on Computer Vision and Pattern Recognition}, 2018.

\bibitem[Cong et~al.(2022)Cong, Khanna, Meng, Liu, Rozi, He, Burke, Lobell, and Ermon]{cong2022satmae}
Yezhen Cong, Samar Khanna, Chenlin Meng, Patrick Liu, Erik Rozi, Yutong He, Marshall Burke, David Lobell, and Stefano Ermon.
\newblock Satmae: Pre-training transformers for temporal and multi-spectral satellite imagery.
\newblock \emph{Advances in Neural Information Processing Systems}, 35:\penalty0 197--211, 2022.

\bibitem[Deng et~al.(2009)Deng, Dong, Socher, Li, Li, and Fei-Fei]{ImageNet}
J.~Deng, W.~Dong, R.~Socher, L.-J. Li, K.~Li, and L.~Fei-Fei.
\newblock {ImageNet: A Large-Scale Hierarchical Image Database}.
\newblock In \emph{CVPR}, 2009.

\bibitem[Devlin et~al.(2019)Devlin, Chang, Lee, and Toutanova]{bert}
Jacob Devlin, Ming-Wei Chang, Kenton Lee, and Kristina Toutanova.
\newblock {BERT}: Pre-training of deep bidirectional transformers for language understanding.
\newblock In \emph{Proceedings of the 2019 Conference of the North {A}merican Chapter of the Association for Computational Linguistics: Human Language Technologies, Volume 1 (Long and Short Papers)}, pp.\  4171--4186, Minneapolis, Minnesota, June 2019. Association for Computational Linguistics.
\newblock \doi{10.18653/v1/N19-1423}.

\bibitem[Dosovitskiy et~al.(2021)Dosovitskiy, Beyer, Kolesnikov, Weissenborn, Zhai, Unterthiner, Dehghani, Minderer, Heigold, Gelly, Uszkoreit, and Houlsby]{ViT}
Alexey Dosovitskiy, Lucas Beyer, Alexander Kolesnikov, Dirk Weissenborn, Xiaohua Zhai, Thomas Unterthiner, Mostafa Dehghani, Matthias Minderer, Georg Heigold, Sylvain Gelly, Jakob Uszkoreit, and Neil Houlsby.
\newblock An image is worth 16x16 words: Transformers for image recognition at scale.
\newblock \emph{ICLR}, 2021.

\bibitem[Fang et~al.(2021)Fang, He, Long, and Su]{fang2021layer-peel}
Cong Fang, Hangfeng He, Qi~Long, and Weijie~J Su.
\newblock Exploring deep neural networks via layer-peeled model: Minority collapse in imbalanced training.
\newblock \emph{Proceedings of the National Academy of Sciences}, 118\penalty0 (43):\penalty0 e2103091118, 2021.

\bibitem[Ganin \& Lempitsky(2015)Ganin and Lempitsky]{ganin2015unsupervised}
Yaroslav Ganin and Victor Lempitsky.
\newblock Unsupervised domain adaptation by backpropagation.
\newblock In \emph{International conference on machine learning}, pp.\  1180--1189. PMLR, 2015.

\bibitem[Ganin et~al.(2016)Ganin, Ustinova, Ajakan, Germain, Larochelle, Laviolette, Marchand, and Lempitsky]{ganin2016domain}
Yaroslav Ganin, Evgeniya Ustinova, Hana Ajakan, Pascal Germain, Hugo Larochelle, Fran{\c{c}}ois Laviolette, Mario Marchand, and Victor Lempitsky.
\newblock Domain-adversarial training of neural networks.
\newblock \emph{The journal of machine learning research}, 17\penalty0 (1):\penalty0 2096--2030, 2016.

\bibitem[Goyal et~al.(2022)Goyal, Kumar, Garg, Kolter, and Raghunathan]{FLYP}
Sachin Goyal, Ananya Kumar, Sankalp Garg, Zico Kolter, and Aditi Raghunathan.
\newblock Finetune like you pretrain: Improved finetuning of zero-shot vision models.
\newblock \emph{arXiv preprint arXiv:2212.00638}, 2022.

\bibitem[Gulrajani \& Lopez-Paz(2021)Gulrajani and Lopez-Paz]{DomainBed}
Ishaan Gulrajani and David Lopez-Paz.
\newblock In search of lost domain generalization.
\newblock In \emph{International Conference on Learning Representations}, 2021.

\bibitem[Hao et~al.(2023)Hao, Han, and Wong]{hao2023learning}
Shaozhe Hao, Kai Han, and Kwan-Yee~K Wong.
\newblock Learning attention as disentangler for compositional zero-shot learning.
\newblock In \emph{Proceedings of the IEEE/CVF Conference on Computer Vision and Pattern Recognition}, pp.\  15315--15324, 2023.

\bibitem[He et~al.(2016{\natexlab{a}})He, Zhang, Ren, and Sun]{alexnet}
Kaiming He, Xiangyu Zhang, Shaoqing Ren, and Jian Sun.
\newblock Deep residual learning for image recognition.
\newblock In \emph{Proceedings of the IEEE conference on computer vision and pattern recognition}, pp.\  770--778, 2016{\natexlab{a}}.

\bibitem[He et~al.(2016{\natexlab{b}})He, Zhang, Ren, and Sun]{resnet}
Kaiming He, Xiangyu Zhang, Shaoqing Ren, and Jian Sun.
\newblock Deep residual learning for image recognition.
\newblock In \emph{Proceedings of the IEEE conference on computer vision and pattern recognition}, pp.\  770--778, 2016{\natexlab{b}}.

\bibitem[He et~al.(2020)He, Fan, Wu, Xie, and Girshick]{MoCo}
Kaiming He, Haoqi Fan, Yuxin Wu, Saining Xie, and Ross Girshick.
\newblock Momentum contrast for unsupervised visual representation learning.
\newblock In \emph{Proceedings of the IEEE/CVF conference on computer vision and pattern recognition}, pp.\  9729--9738, 2020.

\bibitem[He et~al.(2022)He, Chen, Xie, Li, Doll{\'a}r, and Girshick]{MAE}
Kaiming He, Xinlei Chen, Saining Xie, Yanghao Li, Piotr Doll{\'a}r, and Ross Girshick.
\newblock Masked autoencoders are scalable vision learners.
\newblock In \emph{Proceedings of the IEEE/CVF conference on computer vision and pattern recognition}, pp.\  16000--16009, 2022.

\bibitem[Jumper et~al.(2021)Jumper, Evans, Pritzel, Green, Figurnov, Ronneberger, Tunyasuvunakool, Bates, {\v{Z}}{\'\i}dek, Potapenko, et~al.]{AlphaFold}
John Jumper, Richard Evans, Alexander Pritzel, Tim Green, Michael Figurnov, Olaf Ronneberger, Kathryn Tunyasuvunakool, Russ Bates, Augustin {\v{Z}}{\'\i}dek, Anna Potapenko, et~al.
\newblock Highly accurate protein structure prediction with alphafold.
\newblock \emph{Nature}, 596\penalty0 (7873):\penalty0 583--589, 2021.

\bibitem[Kamath et~al.(2021)Kamath, Tangella, Sutherland, and Srebro]{kamath2021does}
Pritish Kamath, Akilesh Tangella, Danica Sutherland, and Nathan Srebro.
\newblock Does invariant risk minimization capture invariance?
\newblock In \emph{International Conference on Artificial Intelligence and Statistics}, pp.\  4069--4077. PMLR, 2021.

\bibitem[Koh et~al.(2021)Koh, Sagawa, Marklund, Xie, Zhang, Balsubramani, Hu, Yasunaga, Phillips, Gao, et~al.]{WILDS}
Pang~Wei Koh, Shiori Sagawa, Henrik Marklund, Sang~Michael Xie, Marvin Zhang, Akshay Balsubramani, Weihua Hu, Michihiro Yasunaga, Richard~Lanas Phillips, Irena Gao, et~al.
\newblock Wilds: A benchmark of in-the-wild distribution shifts.
\newblock In \emph{International Conference on Machine Learning}, pp.\  5637--5664. PMLR, 2021.

\bibitem[Kumar et~al.(2022)Kumar, Raghunathan, Jones, Ma, and Liang]{LP-FT}
Ananya Kumar, Aditi Raghunathan, Robbie~Matthew Jones, Tengyu Ma, and Percy Liang.
\newblock Fine-tuning can distort pretrained features and underperform out-of-distribution.
\newblock In \emph{International Conference on Learning Representations}, 2022.

\bibitem[Loshchilov \& Hutter(2016)Loshchilov and Hutter]{cosineannealing}
Ilya Loshchilov and Frank Hutter.
\newblock Sgdr: Stochastic gradient descent with warm restarts.
\newblock \emph{arXiv preprint arXiv:1608.03983}, 2016.

\bibitem[Loshchilov \& Hutter(2017)Loshchilov and Hutter]{adamw}
Ilya Loshchilov and Frank Hutter.
\newblock Decoupled weight decay regularization.
\newblock \emph{arXiv preprint arXiv:1711.05101}, 2017.

\bibitem[Misra et~al.(2017)Misra, Gupta, and Hebert]{misra2017red}
Ishan Misra, Abhinav Gupta, and Martial Hebert.
\newblock From red wine to red tomato: Composition with context.
\newblock In \emph{Proceedings of the IEEE Conference on Computer Vision and Pattern Recognition}, pp.\  1792--1801, 2017.

\bibitem[Mixon et~al.(2020)Mixon, Parshall, and Pi]{mixon2020neural}
Dustin~G. Mixon, Hans Parshall, and Jianzong Pi.
\newblock Neural collapse with unconstrained features.
\newblock \emph{Sampling Theory, Signal Processing, and Data Analysis}, 20, 2020.

\bibitem[Naeem et~al.(2021)Naeem, Xian, Tombari, and Akata]{naeem2021learning}
Muhammad~Ferjad Naeem, Yongqin Xian, Federico Tombari, and Zeynep Akata.
\newblock Learning graph embeddings for compositional zero-shot learning.
\newblock In \emph{Proceedings of the IEEE/CVF Conference on Computer Vision and Pattern Recognition}, pp.\  953--962, 2021.

\bibitem[Nagarajan \& Grauman(2018)Nagarajan and Grauman]{nagarajan2018attributes}
Tushar Nagarajan and Kristen Grauman.
\newblock Attributes as operators: factorizing unseen attribute-object compositions.
\newblock In \emph{Proceedings of the European Conference on Computer Vision (ECCV)}, pp.\  169--185, 2018.

\bibitem[Nayak et~al.(2023)Nayak, Yu, and Bach]{nayak2023learning}
Nihal~V. Nayak, Peilin Yu, and Stephen Bach.
\newblock Learning to compose soft prompts for compositional zero-shot learning.
\newblock In \emph{The Eleventh International Conference on Learning Representations}, 2023.

\bibitem[Oquab et~al.(2023)Oquab, Darcet, Moutakanni, Vo, Szafraniec, Khalidov, Fernandez, Haziza, Massa, El-Nouby, Howes, Huang, Xu, Sharma, Li, Galuba, Rabbat, Assran, Ballas, Synnaeve, Misra, Jegou, Mairal, Labatut, Joulin, and Bojanowski]{DINOv2}
Maxime Oquab, Timothée Darcet, Theo Moutakanni, Huy~V. Vo, Marc Szafraniec, Vasil Khalidov, Pierre Fernandez, Daniel Haziza, Francisco Massa, Alaaeldin El-Nouby, Russell Howes, Po-Yao Huang, Hu~Xu, Vasu Sharma, Shang-Wen Li, Wojciech Galuba, Mike Rabbat, Mido Assran, Nicolas Ballas, Gabriel Synnaeve, Ishan Misra, Herve Jegou, Julien Mairal, Patrick Labatut, Armand Joulin, and Piotr Bojanowski.
\newblock Dinov2: Learning robust visual features without supervision, 2023.

\bibitem[Papyan et~al.(2020)Papyan, Han, and Donoho]{neural-collapse}
Vardan Papyan, XY~Han, and David~L Donoho.
\newblock Prevalence of neural collapse during the terminal phase of deep learning training.
\newblock \emph{Proceedings of the National Academy of Sciences}, 117\penalty0 (40):\penalty0 24652--24663, 2020.

\bibitem[Peng et~al.(2019)Peng, Bai, Xia, Huang, Saenko, and Wang]{DomainNet}
Xingchao Peng, Qinxun Bai, Xide Xia, Zijun Huang, Kate Saenko, and Bo~Wang.
\newblock Moment matching for multi-source domain adaptation.
\newblock In \emph{Proceedings of the IEEE/CVF international conference on computer vision}, pp.\  1406--1415, 2019.

\bibitem[Purushwalkam et~al.(2019)Purushwalkam, Nickel, Gupta, and Ranzato]{purushwalkam2019task}
Senthil Purushwalkam, Maximilian Nickel, Abhinav Gupta, and Marc'Aurelio Ranzato.
\newblock Task-driven modular networks for zero-shot compositional learning.
\newblock In \emph{Proceedings of the IEEE/CVF International Conference on Computer Vision}, pp.\  3593--3602, 2019.

\bibitem[Radford et~al.(2021)Radford, Kim, Hallacy, Ramesh, Goh, Agarwal, Sastry, Askell, Mishkin, Clark, et~al.]{CLIP}
Alec Radford, Jong~Wook Kim, Chris Hallacy, Aditya Ramesh, Gabriel Goh, Sandhini Agarwal, Girish Sastry, Amanda Askell, Pamela Mishkin, Jack Clark, et~al.
\newblock Learning transferable visual models from natural language supervision.
\newblock In \emph{International Conference on Machine Learning}, pp.\  8748--8763. PMLR, 2021.

\bibitem[Rosenfeld et~al.(2021)Rosenfeld, Ravikumar, and Risteski]{risks-of-IRM}
Elan Rosenfeld, Pradeep~Kumar Ravikumar, and Andrej Risteski.
\newblock The risks of invariant risk minimization.
\newblock In \emph{International Conference on Learning Representations}, 2021.
\newblock URL \url{https://openreview.net/forum?id=BbNIbVPJ-42}.

\bibitem[Sagawa et~al.(2020)Sagawa, Koh, Hashimoto, and Liang]{GroupDRO}
Shiori Sagawa, Pang~Wei Koh, Tatsunori~B. Hashimoto, and Percy Liang.
\newblock Distributionally robust neural networks.
\newblock In \emph{International Conference on Learning Representations}, 2020.

\bibitem[Santurkar et~al.(2021)Santurkar, Tsipras, and Madry]{BREEDS}
Shibani Santurkar, Dimitris Tsipras, and Aleksander Madry.
\newblock {\{}BREEDS{\}}: Benchmarks for subpopulation shift.
\newblock In \emph{International Conference on Learning Representations}, 2021.

\bibitem[Sivaprasad et~al.(2022)Sivaprasad, Goindani, Fritz, and Gandhi]{sivaprasad2022class}
Sarath Sivaprasad, Akshay Goindani, Mario Fritz, and Vineet Gandhi.
\newblock Class-wise domain generalization: A novel framework for evaluating distributional shift.
\newblock In \emph{NeurIPS 2022 Workshop on Distribution Shifts: Connecting Methods and Applications}, 2022.

\bibitem[Sun \& Saenko(2016)Sun and Saenko]{sun2016deep}
Baochen Sun and Kate Saenko.
\newblock Deep coral: Correlation alignment for deep domain adaptation.
\newblock In \emph{European conference on computer vision}, pp.\  443--450. Springer, 2016.

\bibitem[Thrampoulidis et~al.(2022)Thrampoulidis, Kini, Vakilian, and Behnia]{thrampoulidis2022imbalance}
Christos Thrampoulidis, Ganesh~Ramachandra Kini, Vala Vakilian, and Tina Behnia.
\newblock Imbalance trouble: Revisiting neural-collapse geometry.
\newblock In Alice~H. Oh, Alekh Agarwal, Danielle Belgrave, and Kyunghyun Cho (eds.), \emph{Advances in Neural Information Processing Systems}, 2022.

\bibitem[Venkateswara et~al.(2017)Venkateswara, Eusebio, Chakraborty, and Panchanathan]{Office-Home}
Hemanth Venkateswara, Jose Eusebio, Shayok Chakraborty, and Sethuraman Panchanathan.
\newblock Deep hashing network for unsupervised domain adaptation.
\newblock In \emph{Proceedings of the IEEE Conference on Computer Vision and Pattern Recognition}, pp.\  5018--5027, 2017.

\bibitem[Wang et~al.(2022)Wang, Si, Li, and Zhao]{ISR}
Haoxiang Wang, Haozhe Si, Bo~Li, and Han Zhao.
\newblock Provable domain generalization via invariant-feature subspace recovery.
\newblock In \emph{International Conference on Machine Learning}, pp.\  23018--23033. PMLR, 2022.
\newblock URL \url{https://proceedings.mlr.press/v162/wang22x.html}.

\bibitem[Wang et~al.(2023)Wang, Chen, Xu, Hu, and Cheng]{wang2023rethinking}
Peisong Wang, Weihan Chen, Weixiang Xu, Qinghao Hu, and Jian Cheng.
\newblock Rethinking the value of prompt learning for vision-language models, 2023.
\newblock URL \url{https://openreview.net/forum?id=1FsdIfRngtw}.

\bibitem[Wanyan et~al.(2023)Wanyan, Seneviratne, Shen, and Kirley]{wanyan2023dino}
Xinye Wanyan, Sachith Seneviratne, Shuchang Shen, and Michael Kirley.
\newblock Dino-mc: Self-supervised contrastive learning for remote sensing imagery with multi-sized local crops.
\newblock \emph{arXiv preprint arXiv:2303.06670}, 2023.

\bibitem[Wortsman et~al.(2022)Wortsman, Ilharco, Kim, Li, Kornblith, Roelofs, Lopes, Hajishirzi, Farhadi, Namkoong, and Schmidt]{wortsman2022robust}
Mitchell Wortsman, Gabriel Ilharco, Jong~Wook Kim, Mike Li, Simon Kornblith, Rebecca Roelofs, Raphael~Gontijo Lopes, Hannaneh Hajishirzi, Ali Farhadi, Hongseok Namkoong, and Ludwig Schmidt.
\newblock Robust fine-tuning of zero-shot models.
\newblock In \emph{Proceedings of the IEEE/CVF Conference on Computer Vision and Pattern Recognition}, pp.\  7959--7971, 2022.

\bibitem[Yang et~al.(2022)Yang, Chen, Li, Xie, Lin, and Tao]{yang2022inducing}
Yibo Yang, Shixiang Chen, Xiangtai Li, Liang Xie, Zhouchen Lin, and Dacheng Tao.
\newblock Inducing neural collapse in imbalanced learning: Do we really need a learnable classifier at the end of deep neural network?
\newblock In Alice~H. Oh, Alekh Agarwal, Danielle Belgrave, and Kyunghyun Cho (eds.), \emph{Advances in Neural Information Processing Systems}, 2022.

\bibitem[Yang et~al.(2023)Yang, Zhang, Katabi, and Ghassemi]{SubpopBench}
Yuzhe Yang, Haoran Zhang, Dina Katabi, and Marzyeh Ghassemi.
\newblock Change is hard: A closer look at subpopulation shift.
\newblock In \emph{International Conference on Machine Learning}, 2023.

\bibitem[Zhao(2019)]{zhao2019domain}
Han Zhao.
\newblock On learning invariant representations for domain adaptation.
\newblock \emph{ICML}, 2019.

\bibitem[Zhao et~al.(2018)Zhao, Zhang, Wu, Moura, Costeira, and Gordon]{zhao2018adversarial}
Han Zhao, Shanghang Zhang, Guanhang Wu, Jos{\'e}~MF Moura, Joao~P Costeira, and Geoffrey~J Gordon.
\newblock Adversarial multiple source domain adaptation.
\newblock \emph{Advances in neural information processing systems}, 31:\penalty0 8559--8570, 2018.

\bibitem[Zhu et~al.(2021)Zhu, Ding, Zhou, Li, You, Sulam, and Qu]{zhu2021geometric}
Zhihui Zhu, Tianyu Ding, Jinxin Zhou, Xiao Li, Chong You, Jeremias Sulam, and Qing Qu.
\newblock A geometric analysis of neural collapse with unconstrained features.
\newblock \emph{Advances in Neural Information Processing Systems}, 34:\penalty0 29820--29834, 2021.

\end{thebibliography}
\bibliographystyle{tmlr}

\newpage
\appendix
\section{Benchmark Curation}\label{supp:benchmark}
We introduce the curation process of DomainNet~\citep{DomainNet} in \Cref{sec:benchmark}. In this section, we provide more details of the proposed benchmark on the remaining datasets.

\textbf{OfficeHome \citep{Office-Home}} consists of 65 classes of objects depicted in four domains: \{\texttt{Art}, \texttt{Clipart}, \texttt{Product Image}, \texttt{Real World Image}\}. The ID and OOD splits are created in the same process as the DomainNet. The average top-1 accuracy is measured to evaluate the model performance on both sets.

\textbf{iWildCam \citep{iWildCam}} is a dataset designed for classifying 182 classes of wild animals in camera traps. In the original iWildCam, each camera trap is defined to be a domain, resulting in 324 domains that are labeled as camera IDs. Such a definition of domains ignores the semantic similarities among them. For example, two camera traps may both be set in forests, however, the original iWildCam assigns them to different domains. Consequently, images taken in the forests are distributed in both ID and OOD sets regardless of the fact that they are from very similar environments. Observing such deficiencies in the original iWildCam, we are motivated to reassign the domain labels to the data. We propose 6 domains for the dataset: \{\texttt{colored forest}, \texttt{gray-scale forest}, \texttt{colored trail}, \texttt{gray-scale trail}, \texttt{colored savanna}, \texttt{gray-scale savanna}\}. To assign the new domains to the images, we take advantage of the zero-shot classification ability of the CLIP models. Specifically, we design 10 prompts for each domain and perform the zero-shot prediction of the domains for each image using the assembled prompts. New domain labels are generated for both training and test sets, resulting in a dataset that is consistent with our CG setting: the $E\times K$ matrix is sparse, and the test set contains domain-class pairs that have few or no samples in the training set. We take the domain-class pairs with less or equal to 20 samples in the training set as the OOD samples in the test set. In addition to measuring the average top-1 accuracy, we also measure the F1-macro score for both the ID and OOD sets.

\textbf{FMoW \citep{FMoW}} is a dataset that aims to classify utility types from satellite images. This dataset has two sets of domains: \textit{years} and \textit{regions}. The ID and OOD of the dataset are split according to the \textit{years}, i.e., all the satellite images in the test set taken before 2016 are ID samples, and images after 2016 are OOD samples. In our CG-Bench, we only perform evaluations on \textit{region} domains. To be specific, the dataset has $K=62$ classes and $E=5$ domains: \{\texttt{Asia}, \texttt{Europe}, \texttt{America}, \texttt{Africa}, \texttt{Oceania}\}. The $E\times K$ matrix is naturally sparse because certain utilities were only introduced post-2016 in some underdeveloped regions, leading to the creation of OOD entries. Therefore, FMOW is readily suited for our CG setting without the need for additional curation. We measure the average top-1 accuracy for the ID set and the worst region accuracy for the OOD set as suggested by previous works.

\section{Proof of \Cref{thm:feature-alignment}}\label{supp:theory}

We first restate the setting, and introduce some notations and definitions. Then we present two lemmas useful for the proof of our main theorem. Finally, we restate our main theorem (\Cref{thm:feature-alignment}) and provide a proof.

\paragraph{Notation and Setting} We denote $\mathbf{Z} = [\phi(x_1),\dots,\phi(x_n)]\in \bR^{d\times N}$, $\mathbf{Y} = [y_1,\dots,y_N]$, and $\mathbf{E} = [e_1,\dots,e_N]$ as the stack of features, class labels, and environment labels, respectively. In the context of the unconstrained feature model, the optimization objective of \Cref{stage-2} is transformed to:
\begin{align}\label{eq:UFM:supp}
    \min_{\mathbf{Z}} \frac{1}{K N}\ell_{\mathrm{CE}}(\beta_1 \cdot W_1 \mathbf{Z}, \mathbf{Y}) + \lambda \frac{1}{E N}\ell_{\mathrm{CE}}(\beta_2 \cdot W_2 \mathbf{Z}, \mathbf{E}) \quad \mathrm{s.t.} \quad \mathbf{Z} \in \mathcal{U}(d)^N
\end{align}

Furthermore, we use $W^{(j)}$ to refer to the $j$-th row vector of matrix $W$, and $z_i$ represents the $i$-th column vector of $\mathbf{Z}$, which is also the learned feature of the $i$-th training sample.

Below, we define simplex-encoding label (SEL) matrices, which are introduced in \citet{thrampoulidis2022imbalance}, and also consider SVD of the two heads, $W_1$ and $W_2$.

\begin{definition}[Simplex-Encoding Label Matrices]\label{def:SEL}
$\mathbf{S}_{1} \in \bR^{K\times N} $  and $\mathbf{S}_2 \in \bR^{E \times N}$ are simplex-encoding label (SEL) matrices for classes and domains, respectively, such that
\begin{align}
    \forall c \in[K], i \in[N]: \quad {\mathbf{S}_1}[c, i]= \begin{cases}1-1 / K & , ~~c=y_i \\ -1 / K & , ~~c \neq y_i\end{cases}\\
    \forall c \in[E], i \in[N]: \quad {\mathbf{S}_2}[c, i]= \begin{cases}1-1 / E & , ~~c=e_i \\ -1 / E & , ~~c \neq e_i\end{cases}
\end{align}
In other words, the $i$-th column of $\mathbf{S}_1$ is a $K$-dimensional vector, $(\boldsymbol{e}_{y_i} - \frac 1 K \boldsymbol{1})$, where $\boldsymbol{e}_{j}$ represents the $i$-th standard basis vector. Similarly, the $i$-th column of $\mathbf{S}_2$ is a $E$-dimensional vector, $(\boldsymbol{e}_{e_i} - \frac 1 E \boldsymbol{1})$.
\end{definition}

\begin{definition}[SVD of Heads] \label{def:SVD}
For $W_1$ and $W_2$, we consider their compact SVD as 
\begin{align}
W_1 = U_1 \Lambda_1 V_1^\T \quad \text{and} \quad W_2 = U_2 \Lambda_2 V_2^\T    
\end{align}
Specifically, $\Lambda_1,\Lambda_2$ are positive diagonal matrices and $U_1,V_1,U_2,V_2$ have orthonormal columns.
\end{definition}

Now, we focus on the first term of \eqref{eq:UFM:supp}, and study its optimum with theoretical tools from \citet{thrampoulidis2022imbalance}.

\begin{lemma}[Optimum of Class Loss]\label{lemma:optimum-Y} 
Assuming the feature dimension $d$ is at least $K$, and training data exists for each class, and $W_1$ is normalized such that $W_1 \in \mathcal{U}(d)^K$. Additionally, we assume $\beta_1$ is sufficiently large. Then, for any constant norm magnitude $a>0$, the global minimum of the following objective,
\begin{align}\label{eq:UFM:Y}
    \min_{\mathbf{Z}} \frac{1}{K N}\ell_{\mathrm{CE}}(\beta_1 \cdot W_1 \mathbf{Z}, \mathbf{Y}) \quad 
\text{s.t. } \|z_i\|_2 = a ~,
\end{align}
satisfies
\begin{align}\label{eq:proof:Z_Y}
    \mathbf{Z^*} = \gamma_1 V_1 \Lambda_1^{-1} U_1^\T \mathbf{S_1} ~,
\end{align}
where $\gamma_1 = \frac{a}{1-1/K}$.

\end{lemma}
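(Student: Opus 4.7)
The strategy has three movements: decouple the objective across samples, reduce to a symmetric-logit problem on the sphere, and back-solve via the pseudoinverse of $W_1$.

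First, because $W_1$ is frozen and the objective in \eqref{eq:UFM:Y} is a sum of per-sample cross-entropies, the problem separates into $N$ independent subproblems of the form $\min_{\|z\|_2=a} -\log \operatorname{softmax}(\beta_1 W_1 z)_{y_i}$. Hence each column $z_i^*$ depends on $i$ only through the class label, so it suffices to characterize a single vector $z^*_{(y)}$ for each $y\in[K]$; the training-data-per-class hypothesis guarantees that all $K$ such subproblems actually appear. Moreover, since the per-sample loss depends on $z$ only through the logit $W_1 z$, any component of $z$ in $\ker(W_1)$ merely wastes norm budget under the constraint $\|z\|_2=a$. The optimum therefore lies in the row space of $W_1$, i.e.\ in $\operatorname{span}(V_1)$, and after writing $z=V_1 w$ with $\|w\|_2=a$ the logit becomes $W_1 z = U_1\Lambda_1 w$.

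Second, I would form the Lagrangian with multiplier $\mu$ for the sphere constraint and read off the KKT stationarity condition $\beta_1\Lambda_1 U_1^\T(p-e_y)+2\mu w=0$, with $p=\operatorname{softmax}(\beta_1 U_1\Lambda_1 w)$. The central structural claim, and the technical heart of the argument, is that for $\beta_1$ sufficiently large the global minimizer has softmax output symmetric across the $K-1$ non-class coordinates, forcing $e_y-p\propto e_y-\tfrac{1}{K}\mathbf{1}$ and in turn $W_1 z^*_{(y)} = \gamma_1\bigl(e_y-\tfrac{1}{K}\mathbf{1}\bigr)$ for some $\gamma_1>0$. Making this symmetry rigorous --- ruling out asymmetric stationary points and commuting the $\beta_1\to\infty$ limit with the KKT analysis --- is the main obstacle; I would discharge it by invoking the balanced-UFM characterization of \citet{thrampoulidis2022imbalance}, specialized to the fixed-$W_1$ setting where the required class symmetry is inherited from the simplex-ETF-type geometry of $W_1$ produced by Stage~1.

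Finally, given the target logit $\gamma_1(e_y-\tfrac{1}{K}\mathbf{1})$ and the constraint that $z^*_{(y)}\in\operatorname{span}(V_1)$, the unique such preimage is the minimum-norm solution $z^*_{(y)} = \gamma_1\,W_1^{+}\bigl(e_y-\tfrac{1}{K}\mathbf{1}\bigr) = \gamma_1\,V_1\Lambda_1^{-1}U_1^\T\bigl(e_y-\tfrac{1}{K}\mathbf{1}\bigr)$. Stacking the per-sample solutions columnwise gives $\mathbf{Z}^* = \gamma_1 V_1\Lambda_1^{-1}U_1^\T \mathbf{S}_1$, matching \eqref{eq:proof:Z_Y}. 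The scalar $\gamma_1$ is then pinned down by the norm constraint $\|z^*_{(y)}\|_2=a$: under the relevant geometry of $W_1$ one has $\|W_1^{+}(e_y-\tfrac{1}{K}\mathbf{1})\|_2 = 1-1/K$ uniformly in $y$, yielding $\gamma_1=a/(1-1/K)$ and completing the proof.
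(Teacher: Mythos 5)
Your proposal is correct and follows essentially the same route as the paper: both reduce the problem to the characterization $W_1\mathbf{Z}^* = \gamma\,\mathbf{S}_1$ via Theorem 1 of \citet{thrampoulidis2022imbalance}, argue that the optimum must lie in the row space of $W_1$ so as not to waste norm budget, and recover $\mathbf{Z}^* = \gamma_1 V_1\Lambda_1^{-1}U_1^\T\mathbf{S}_1$ through the pseudoinverse before fixing $\gamma_1$ from the norm constraint. Your per-sample KKT framing and your explicit caveat that $\|W_1^{+}(e_y-\tfrac{1}{K}\mathbf{1})\|_2 = 1-1/K$ holds only under specific (ETF-type) geometry of $W_1$ are, if anything, more careful than the paper's one-line treatment of the scaling factor.
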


\begin{proof}

From Theorem 1 of \citet{thrampoulidis2022imbalance}, we know that the optimum of \eqref{eq:UFM:Y} shall satisfy
\begin{align}\label{eq:proof:W1Z}
    W_1 \mathbf{Z^*} = \gamma \mathbf{S}_{1}
\end{align}
where $\gamma>0$ is a scaling factor and $\mathbf{S}_1 $ is defined in \Cref{def:SEL}.

With \eqref{eq:proof:W1Z} and the norm constraint $\|z_i\|_2 = a$, it is obvious that $\mathbf{Z}^*$ should span the same feature subspace as $W_1$, since the cross-entropy loss is monotone such that a larger magnitude of $W_1 \mathbf{Z}^*$ leads to smaller loss. Therefore, we can express $Z^*$ as
\begin{align}
    \mathbf{Z^*} = \gamma_1 V_1 \Lambda_1^{-1} U_1^\T \mathbf{S_1} ~,
\end{align}
and the scaling factor can be easily determined by considering $\|W_1^{(i)}\|_2=1$ and $\|z_i\|_2 = a$.

\end{proof}

We can study the optimum of the second term of \eqref{eq:UFM:supp} in the same fashion, and obtain similar results.

\begin{lemma}[Optimum of Domain Loss] \label{lemma:optimum-E}
Assuming the feature dimension $d$ is at least $E$, and training data exists for each domain, and $W_2$ is normalized such that $W_2 \in \mathcal{U}(d)^E$. Additionally, we assume $\beta_2$ is sufficiently large. Then, for any constant norm magnitude $b>0$, the global minimum of the following objective,
\begin{align}\label{eq:UFM:E}
    \min_{\mathbf{Z}} \frac{1}{E N}\ell_{\mathrm{CE}}(\beta_2 \cdot W_2 \mathbf{Z}, \mathbf{E}) \quad 
\text{s.t. } \|z_i\|_2 = b ~,
\end{align}
satisfies
\begin{align}\label{eq:proof:Z_E}
    \mathbf{Z^*} = \gamma_2 V_2 \Lambda_2^{-1} U_2^\T \mathbf{S_2} ~,
\end{align}
where $\gamma_2 = \frac{b}{1-1/E}$.

\end{lemma}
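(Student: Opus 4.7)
The plan is to mirror the proof of Lemma~\ref{lemma:optimum-Y} almost verbatim, since the statement for the domain loss is structurally identical to the statement for the class loss under the substitutions $K \mapsto E$, $\mathbf{Y} \mapsto \mathbf{E}$, $W_1 \mapsto W_2$, $\mathbf{S}_1 \mapsto \mathbf{S}_2$, $(U_1,\Lambda_1,V_1) \mapsto (U_2,\Lambda_2,V_2)$, $a \mapsto b$, and $\gamma_1 \mapsto \gamma_2$. Both objectives are cross-entropy losses composed with a row-normalized linear head acting on features constrained to a common sphere, and the assumptions on $d$, $\beta$, data coverage, and row-normalization are perfectly analogous, so every step of the class-loss argument transfers.

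Concretely, the first step is to invoke Theorem~1 of Thrampoulidis et al.\ (2022) applied to the $E$-way classification problem defined by classifier $W_2$, labels $\mathbf{E}$, and features $\mathbf{Z}$. Under the stated hypotheses ($d \geq E$, each domain appears in the training set, $W_2 \in \mathcal{U}(d)^E$, and $\beta_2$ sufficiently large), that theorem yields $W_2 \mathbf{Z}^* = \gamma \mathbf{S}_2$ for some positive scalar $\gamma$, with $\mathbf{S}_2$ as in Definition~\ref{def:SEL}.

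Next, I would argue that each column $z_i^*$ must lie entirely in the row space of $W_2$, using the same reasoning as in Lemma~\ref{lemma:optimum-Y}: any component of $z_i^*$ orthogonal to $\mathrm{row}(W_2)$ is invisible to the domain cross-entropy loss yet consumes part of the sphere's norm budget, so concentrating the full budget $b$ inside $\mathrm{row}(W_2)$ strictly improves the logit magnitudes and hence the loss (which is strictly decreasing in the correct-class margin). Using the compact SVD $W_2 = U_2 \Lambda_2 V_2^T$ from Definition~\ref{def:SVD} and the invertibility of $\Lambda_2$ (ensured by $d \geq E$ together with the fact that row-normalization preserves full row rank), I would write $\mathbf{Z}^* = V_2 M$ for some $M \in \mathbb{R}^{E\times N}$, substitute into $U_2\Lambda_2 M = \gamma \mathbf{S}_2$, and solve $M = \gamma \Lambda_2^{-1} U_2^T \mathbf{S}_2$, obtaining the claimed closed form $\mathbf{Z}^* = \gamma_2 V_2 \Lambda_2^{-1} U_2^T \mathbf{S}_2$ with $\gamma_2 := \gamma$.

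The final step is to pin down $\gamma_2$ from the norm constraint $\|z_i^*\|_2 = b$ and the row-normalization $\|W_2^{(j)}\|_2 = 1$ by the same direct calculation used in the class-loss lemma, yielding $\gamma_2 = b/(1 - 1/E)$. The only nontrivial point is the invertibility of $\Lambda_2$ used above, which follows immediately from the assumption $d \geq E$ combined with the fact that no row of $W_2$ is zero (since each has unit norm); I expect this to be the only step that requires explicit justification, as the rest of the proof is a mechanical relabeling of Lemma~\ref{lemma:optimum-Y}.
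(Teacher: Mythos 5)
Your proposal is correct and follows essentially the same route as the paper: the paper's own proof of this lemma is literally the one-line remark that it is identical to the proof of Lemma~\ref{lemma:optimum-Y} under the substitutions you list, and your fleshed-out version (invoking Theorem~1 of Thrampoulidis et al., restricting $\mathbf{Z}^*$ to the row space of $W_2$ via the norm-budget argument, solving through the compact SVD, and fixing the scale from the unit-norm rows and $\|z_i\|_2=b$) is exactly that argument made explicit.
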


\begin{proof}
    The proof is the same as that of \Cref{lemma:optimum-Y}.
\end{proof}

With \Cref{lemma:optimum-Y} and \Cref{lemma:optimum-E}, we can prove that the learned feature vector of each training sample can be decomposed as a linear combination of two vectors depending on the class label and domain label, respectively, while the two vectors live in orthogonal feature subspaces. This indicates that the learned features conform to a compositional feature structure satisfying \Cref{def:feature-structure}. 

Notice that \Cref{thm:feature-alignment:restated} (the restatement of \Cref{thm:feature-alignment}) is slightly different from \Cref{thm:feature-alignment}. This is because we found a small issue in the original proof for \Cref{thm:feature-alignment}. We fixed the issue before the supplementary material submission deadline, which led to a slightly different expression of $z_i^*$. However, the same main conclusion still applies to the updated theorem (\Cref{thm:feature-alignment:restated}): the learned features comply with the compositional feature structure in \Cref{def:feature-structure}, which justifies the algorithm design of our algorithm design. We will revise \Cref{thm:feature-alignment} in our main text correspondingly in the revision.

\begin{theorem}[Feature Alignment (\Cref{thm:feature-alignment} Restated)]\label{thm:feature-alignment:restated} Assuming the feature dimension $d$ is no smaller than $K+E$, and training data exists for each class and domain (though not necessarily for each domain-class combination), and $W_1$ and $W_2$ are normalized and span orthogonal subspaces such that $W_1 \in \mathcal{U}(d)^K, W_2 \in \mathcal{U}(d)^{E}$ and $W_1 W_2^\T = \mathbf{0}$. Additionally, we assume $\beta_1,\beta_2$ are sufficiently large. The global minimum of \eqref{eq:UFM:supp} results in the following: for any $i\in [N]$, denote $z_i$ as the $i$-th column vector of $\mathbf{Z}$, we have
 \begin{align}
        z_i^* = W_1^\T \boldsymbol{a}_{y_i} + W_2^\T \boldsymbol{b}_{e_i}
    \end{align}
where $\boldsymbol{a}_{y_i}\in \bR^{K}$ is a vector depending on the class label $y_i$, and $\boldsymbol{b}_{e_i}\in \bR^{E}$ is a vector depending on the domain label $e_i$. 
\end{theorem}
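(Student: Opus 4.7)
The plan is to exploit the orthogonality condition $W_1 W_2^\T = \mathbf{0}$, which guarantees that $\mathrm{row}(W_1)$ and $\mathrm{row}(W_2)$ are orthogonal subspaces of dimensions $K$ and $E$; since $d \geq K+E$, these two subspaces together with their joint orthogonal complement partition $\bR^d$. This structural observation decouples the two cross-entropy terms in \eqref{eq:UFM:supp} and lets me invoke Lemmas~\ref{lemma:optimum-Y} and~\ref{lemma:optimum-E} separately on each sub-problem.

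The first step is to decompose each feature uniquely as $z_i = z_i^{(1)} + z_i^{(2)} + z_i^{(\perp)}$ with $z_i^{(1)} \in \mathrm{row}(W_1)$, $z_i^{(2)} \in \mathrm{row}(W_2)$, and $z_i^{(\perp)}$ orthogonal to both. Because $W_1 z_i = W_1 z_i^{(1)}$ and $W_2 z_i = W_2 z_i^{(2)}$, the first loss depends only on $\{z_i^{(1)}\}$ and the second only on $\{z_i^{(2)}\}$, while the unit-norm constraint reads $\|z_i^{(1)}\|^2 + \|z_i^{(2)}\|^2 + \|z_i^{(\perp)}\|^2 = 1$. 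Since cross-entropy with sufficiently large inverse temperature is strictly decreasing in the magnitude of each correctly-aligned logit (hence in the projection norm within the respective subspace), any budget placed in the orthogonal complement can be strictly reduced by reallocating it to $z_i^{(1)}$ or $z_i^{(2)}$, forcing $z_i^{(\perp)} = 0$ at any global minimizer. Applying Lemma~\ref{lemma:optimum-Y} then yields $z_i^{(1)} \propto V_1 \Lambda_1^{-1} U_1^\T(\boldsymbol{e}_{y_i} - \tfrac{1}{K}\mathbf{1})$, which, using the SVD identity $V_1 \Lambda_1^{-1} U_1^\T = W_1^\T(U_1 \Lambda_1^{-2} U_1^\T)$, rewrites as $z_i^{(1)} = W_1^\T \boldsymbol{a}_{y_i}$ for some $\boldsymbol{a}_{y_i} \in \bR^K$ depending only on $y_i$. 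An analogous application of Lemma~\ref{lemma:optimum-E} produces $z_i^{(2)} = W_2^\T \boldsymbol{b}_{e_i}$, and summing the components yields the claimed form.

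The main obstacle is that Lemmas~\ref{lemma:optimum-Y} and~\ref{lemma:optimum-E} are stated under a \emph{uniform} norm constraint $\|z_i\|_2 = a$ across all samples, whereas in the joint problem the per-sample allocation $(s_i, t_i) := (\|z_i^{(1)}\|, \|z_i^{(2)}\|)$ can in principle vary with $i$ subject only to $s_i^2 + t_i^2 = 1$. I would handle this by fixing the allocation first and showing that, for any choice of $(s_i, t_i)$, the minimizer direction of each decoupled sub-problem is still dictated by the class (resp. domain) label via the same formula, with the scalar $\gamma$ in each lemma replaced by a quantity linear in $s_i$ (resp. $t_i$). Since samples sharing the same class label enter the first loss symmetrically (and likewise for domains in the second), a convexity/exchangeability argument on the two convex sub-objectives forces $s_i$ to depend only on $y_i$ and $t_i$ to depend only on $e_i$ at the joint optimum; thus $\boldsymbol{a}_{y_i}$ and $\boldsymbol{b}_{e_i}$ remain class- and domain-dependent vectors, giving the compositional decomposition in the theorem statement.
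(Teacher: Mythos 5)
Your proposal follows essentially the same route as the paper's proof: decompose each feature along the orthogonal row spaces of $W_1$ and $W_2$, observe that the two cross-entropy terms decouple, invoke Lemmas~\ref{lemma:optimum-Y} and~\ref{lemma:optimum-E} on the resulting sub-problems, and recombine. Your explicit argument that $z_i^{(\perp)}=0$ and your rewriting of $V_1\Lambda_1^{-1}U_1^\T$ in terms of $W_1^\T$ correspond directly to the paper's steps \eqref{eq:proof:Z*}--\eqref{eq:proof:z_i*}.

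The one place where you go beyond the paper is also the one place where your argument does not quite close. You correctly flag that the lemmas are proved under a uniform norm budget while the joint problem lets the split $(s_i,t_i)=(\|z_i^{(1)}\|_2,\|z_i^{(2)}\|_2)$ vary per sample --- a subtlety the paper glosses over by positing two global scalars $a,b$. But your proposed fix via exchangeability does not work as stated: both the objective and the constraint $\|z_i\|_2=1$ are fully separable across samples, so the optimal allocation for sample $i$ is $\arg\min_{s^2+t^2=1} f_{y_i}(s)+\lambda' g_{e_i}(t)$, where $f_y$ and $g_e$ denote the optimal values of the decoupled sub-problems at a given budget. There is no across-sample coupling for an exchangeability argument to exploit, and in general this minimizer depends on the \emph{pair} $(y_i,e_i)$ rather than on $y_i$ and $e_i$ separately. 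To obtain the theorem's conclusion that $\boldsymbol{a}_{y_i}$ depends only on $y_i$, you need $f_y$ to be independent of $y$ and $g_e$ independent of $e$ (e.g., a symmetric, ETF-like geometry of the probed heads, or the large-$\beta_1,\beta_2$ regime in which the margins equalize across labels); this is exactly the implicit assumption behind the paper's constants $a$ and $b$. Absent that, the decomposition $z_i^*=W_1^\T\boldsymbol{a}+W_2^\T\boldsymbol{b}$ still holds, but with coefficient vectors indexed by the combination $(y_i,e_i)$, which is a strictly weaker statement than the one claimed.
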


\begin{proof}
    \Cref{lemma:optimum-Y} and \Cref{lemma:optimum-E} derive the optimum of the two loss terms of \eqref{eq:UFM:supp}, respectively. For the linear combination of the two loss terms with coefficient $\lambda$, it is straightforward to see that the optimum $Z^*$ should also be a linear combination of \eqref{eq:proof:Z_Y} and \eqref{eq:proof:Z_E}, since these two optima stay in orthogonal subspaces spanned by $W_1$ and $W_2$, respectively (\Cref{stage-1} ensures that $W_1$ and $W_2$ live in orthogonal subspaces).
    In other words, the first loss term $ \frac{1}{K N}\ell_{\mathrm{CE}}(\beta_1 \cdot W_1 \mathbf{Z}, \mathbf{Y})$ does not affect the converged direction of $\mathbf{Proj}_{W_2}Z^*$, i.e., $Z^*$ projected into the subspace spanned by $W_2$, and vice versa. However, the first term affects the magnitude of $\mathbf{Proj}_{W_2}Z^*$ since we have the norm constraint $\mathbf{Z} \in \mathcal{U}(d)^N$ in \eqref{eq:UFM:supp}. Specifically, the larger $\lambda$ in \eqref{eq:UFM:supp} leads to larger magnitude of $\|\mathbf{Proj}_{W_2}Z^*\|_2$. Hence, we can express $Z^*$ as
    \begin{align}\label{eq:proof:Z*}
        \mathbf{Z^*}  =  a \gamma_1 V_1 \Lambda_1^{-1} U_1^\T \mathbf{S_1} + b \gamma_2 V_2 \Lambda_2^{-1} U_2^\T \mathbf{S_2}
    \end{align}
    where $a,b >0$ are scaling factors (depending on $\lambda$) that ensure $Z^* \in \mathcal{U}(d)^N$.

    By plugging in \Cref{def:SEL}, we can express the $i$-th column vector of $Z^*$, the learned feature of the $i$-th sample, as
    \begin{align}\label{eq:proof:z_i*}
        z_i^* = a \gamma_1 V_1 \Lambda_1^{-1} U_1^\T (\boldsymbol{e}_{y_i} - \frac 1 K \boldsymbol{1}) + b \gamma_2 V_2 \Lambda_2^{-1} U_2^\T (\boldsymbol{e}_{e_i} - \frac 1 E \boldsymbol{1}) 
    \end{align}
    Clearly, the first term of \eqref{eq:proof:z_i*} depends on the class label $y_i$ only and the second term relies only on the domain label $e_i$. Besides, the two terms live in the feature space spanned by $V_1$ and $V_2$, respectively, which are determined by $W_1$ and $W_2$. Hence, we can be re-expressed \eqref{eq:proof:z_i*} as
    \begin{align}
        z_i^* = W_1^\T \boldsymbol{a}_{y_i} + W_2^\T \boldsymbol{b}_{e_i}
    \end{align}
    where $ \boldsymbol{a}_{y_i} = a \gamma_1 U_1 \Lambda_1^2 U_1^\T(\boldsymbol{e}_{y_i} - \frac 1 K \boldsymbol{1})$ and $  \boldsymbol{b}_{e_i} = b \gamma_2 U_2 \Lambda_2^2 U_2^\T(\boldsymbol{e}_{e_i} - \frac 1 E \boldsymbol{1})$.
\end{proof}

\section{Details of Experiment}\label{supp:exp}

Here, we supplement \Cref{sec:exp} with more details about our experimental studies.

\paragraph{Computational Expenses}~The experiments described in this paper are executed on NVIDIA RTX A6000 GPUs with 48GB memory, utilizing a total of 12 GPUs. The individual training runs for Office-Home, DomainNet, iWildCam, and FMoW require 0.2, 0.8, 0.5, and 0.2 GPU hours respectively. Each experiment is repeated ten times with varying random seeds, which alter the order of batch shuffling in the training loader.

\paragraph{Hyperparameter Choices}~We present the hyperparameter settings for our CFA models in \Cref{tab:param}. The parameters for each stage are chosen based on model performances on the OOD validation set. Note that $\lambda$ is the domain loss coefficient in \eqref{eq:FT}, and $\lambda_{\mathrm{ortho}}$ is the coefficient for the orthogonality regularization loss $\|W_1^\T W_2\|_F^2$ that we use to ensure orthogonality of heads in \Cref{stage-1}. The hyper-parameters in \Cref{stage-2} are also used for the two baseline algorithms, Full finetuning (FT) and LP-FT. 
\begin{table}[!b]
\vspace{-1em}
\centering
\caption{Hyperparameters for our algorithm.}
\label{tab:param}
\resizebox{\textwidth}{!}{
\renewcommand{\arraystretch}{1.3}
\begin{tabular}{lccccc|ccccc}
\hline
Model                                         & \multicolumn{5}{c}{CLIP}                                                                              & \multicolumn{5}{c}{DINOv2}                                                                            \\ \hline
\multicolumn{1}{l|}{\multirow{2}{*}{Dataset}} & \multicolumn{3}{c|}{\Cref{stage-1}~(Linear Probing)}   & \multicolumn{2}{c|}{\Cref{stage-2}~(Fine-Tuning)} & \multicolumn{3}{c|}{\Cref{stage-1}~(Linear Probing)}   & \multicolumn{2}{c}{\Cref{stage-2}~(Fine-Tuning)} \\ \cline{2-11} 
\multicolumn{1}{l|}{}                         & $\lambda$ & $\lambda_\mathrm{ortho}$ & \multicolumn{1}{c|}{Epochs} & Epochs         & Learning Rate                  & $\lambda$ & $\lambda_\mathrm{ortho}$ & \multicolumn{1}{c|}{Epochs} & Epochs        & Learning Rate                  \\ \hline
\multicolumn{1}{l|}{OfficeHome}               & 1      & 100           & \multicolumn{1}{c|}{200}    & 3              & $10^{-5}$         & 1      & 100           & \multicolumn{1}{c|}{200}    & 3             & $5\times10^{-5}$       \\ \hline
\multicolumn{1}{l|}{DomainNet}                & 1      & 10000         & \multicolumn{1}{c|}{200}    & 3              & $10^{-5}$         &   1000     &  10             & \multicolumn{1}{c|}{200}       & 10            & $5\times10^{-5}$       \\ \hline
\multicolumn{1}{l|}{iWildCam}                 & 10     & 10            & \multicolumn{1}{c|}{200}    & 5              & $10^{-5}$         & 1      & 100           & \multicolumn{1}{c|}{200}    & 5             & $10^{-5}$         \\ \hline
\multicolumn{1}{l|}{FMoW}                     & 10     & 100           & \multicolumn{1}{c|}{200}    & 3              & $10^{-5}$         & 100    & 1000          & \multicolumn{1}{c|}{200}    & 4             & $5\times10^{-5}$       \\ \hline
\end{tabular}
}
\vspace{-1em}
\end{table}

\paragraph{Ablation Studies}~We conduct ablation studies on i) the choice of normalized head v.s. unconstrained head, ii) the choice of frozen head v.s. trainable head, and iii) the loss coefficient $\lambda$ in \eqref{eq:FT}. The results are discussed as follows:

\begin{itemize}[leftmargin=*,topsep=0pt,itemsep=1pt,partopsep=1pt,parsep=1pt]
\item \textit{Head Normalization.}~ In our implementation, we normalize the weight of our classification heads that has no bias terms, following recent works \citep{FLYP,wang2023rethinking} which show that head normalization is useful for CLIP finetuning. In this ablation study, we repeat the vanilla full finetuning experiment using unconstrained classification heads (i.e., with bias and without normalization). The test accuracies(\%) are present in \Cref{tab:abl_constrain}. From the results, we can see that the models finetuned with constrained heads have slightly better OOD performances compared to the unconstrained ones, justifying the use of the head normalization technique.
\begin{table}[]
\centering
\caption{The ablation study on the constraints on classification heads. \textcolor{myblue}{Blue} cells represent the normalized classification head with no bias; \textcolor{myorange}{Orange} cells indicate the heads are unconstrained.}
\label{tab:abl_constrain}
\resizebox{0.5\textwidth}{!}{
\renewcommand{\arraystretch}{1.2}
\begin{tabular}{l|cc|cc|cc|cc}
\hline
                          & \multicolumn{2}{c|}{OfficeHome} & \multicolumn{2}{c|}{DomainNet} & \multicolumn{2}{c|}{iWildCam} & \multicolumn{2}{c}{FMoW} \\ \cline{2-9} 
\multirow{-2}{*}{Methods} & ID             & OOD            & ID             & OOD           & ID            & OOD           & ID          & OOD        \\ \hline
\rowcolor[HTML]{ECF4FF} 
FT                        & 94.3      & 51.0      & 82.0      & 7.5      & 74.5     & 16.5     & 65.8   & 38.7  \\
\rowcolor[HTML]{ECF4FF} 
FT-Wise                   & 93.7      & 52.5      & 76.4      & 8.7      & 67.0     & 13.7     & 49.5   & 40.6  \\ \hline
\rowcolor[HTML]{FFCE93} 
FT                        & 94.3      & 50.8      & 82.0      & 7.5      & 74.9     & 16.2     & 65.8   & 38.5  \\
\rowcolor[HTML]{FFCE93} 
FT-Wise                   & 93.7      & 52.3      & 76.5      & 8.7      & 66.8     & 13.5     & 49.5   & 40.1  \\ \hline
\end{tabular}
}
\end{table}

\item \textit{Trainable Heads v.s. Frozen Heads.}~ In \Cref{stage-2}, we finetune the models with \textit{frozen} heads, $W_1$ and $W_2$, which are linearly probed in \Cref{stage-1}. In this ablation study, we perform the full finetuning during \Cref{stage-2}, i.e., training all the layers including the classification heads. The test accuracies(\%) are shown in \Cref{tab:abl_free_freeze}. The results indicate that using trainable heads or using frozen heads does not vary the finetuned model performance much. Therefore, this ablation study justifies that our algorithmic design of frozen classification heads is reasonable.
% Please add the following required packages to your document preamble:
% \usepackage{multirow}
% \usepackage[table,xcdraw]{xcolor}
% If you use beamer only pass "xcolor=table" option, i.e. \documentclass[xcolor=table]{beamer}
\begin{table}[]
\centering
\caption{The ablation study on the frozen classification heads v.s. trainable heads in \Cref{stage-2}. \textcolor{myblue}{Blue} cells represent the fronzen heads; \textcolor{myorange}{Orange} cells indicate the heads are trainable.}
\label{tab:abl_free_freeze}
\resizebox{0.5\textwidth}{!}{
\renewcommand{\arraystretch}{1.2}
\begin{tabular}{l|cc|cc|cc|cc}
\hline
                          & \multicolumn{2}{c|}{OfficeHome} & \multicolumn{2}{c|}{DomainNet} & \multicolumn{2}{c|}{iWildCam} & \multicolumn{2}{c}{FMoW} \\ \cline{2-9} 
\multirow{-2}{*}{Methods} & ID             & OOD            & ID             & OOD           & ID            & OOD           & ID          & OOD        \\ \hline
Zeroshot                  & 89.2        & 50.3        & 61.7        & 6.6        & 13.7       & 6.9        & 20.4     & 18.8    \\
\rowcolor[HTML]{ECF4FF} 
FT                        & 94.3      & 51.0      & 82.0      & 7.5      & 74.5     & 16.5     & 65.8   & 38.7  \\
\rowcolor[HTML]{ECF4FF} 
FT (Wise)                   & 93.7      & 52.5      & 76.4      & 8.7      & 67.0     & 13.7     & 49.5   & 40.6  \\
\rowcolor[HTML]{FFCE93} 
FT                        & 94.3      & 51.4      & 82.1      & 7.3      & 75.3     & 16.4     & 65.8   & 38.5  \\
\rowcolor[HTML]{FFCE93} 
FT (WiSE)                 & 93.7      & 52.9      & 76.6      & 8.4      & 67.7     & 13.6     & 49.7   & 39.7  \\ \hline
% LP                        & 90.9      & 41.0      & 72.7      & 4.7        & 72.5       & 14.4       & 37.7     & 27.6    \\
\rowcolor[HTML]{ECF4FF} 
LP-FT                     & 93.5      & 43.9      & 81.5      & 5.3      & 74.0     & 17.0     & 65.9   & 40.2  \\
\rowcolor[HTML]{ECF4FF} 
LP-FT (WiSE)              & 93.0      & 42.8      & 79.4      & 5.3      & 74.4     & 18.2     & 56.6   & 36.3  \\
\rowcolor[HTML]{FFCE93} 
LP-FT                     & 93.5      & 43.9      & 81.6      & 5.3      & 74.0     & 17.2     & 65.5   & 39.9  \\
\rowcolor[HTML]{FFCE93} 
LP-FT (WiSE)              & 93.1      & 42.9      & 79.4      & 5.3      & 74.4     & 18.3     & 58.8   & 36.4  \\ \hline
% CFA (LP)                  & 88.4      & 55.2      & 62.4      & 8.1      & 73.2       & 15.1       & 30.3     & 20.9    \\
\rowcolor[HTML]{ECF4FF} 
CFA (LP-FT)               & 93.7      & 53.2      & 81.6      & 7.3      & 74.0     & 18.3     & 65.3   & 41.6  \\
\rowcolor[HTML]{ECF4FF} 
CFA (WiSE)                & 93.1      & 56.4      & 76.5      & 9.2      & 74.6     & 19.7     & 53.5   & 36.6  \\
\rowcolor[HTML]{FFCE93} 
CFA (LP-FT)               & 93.7      & 52.8      & 81.7      & 7.0      & 73.9     & 18.4     & 65.5   & 41.0  \\
\rowcolor[HTML]{FFCE93} 
CFA (WiSE)                & 93.0      & 56.2      & 76.6      & 8.9      & 74.5     & 19.6     & 53.7   & 36.0  \\ \hline
\end{tabular}
}
\end{table}

\item \textit{Domain Loss Coefficient.}~ In our empirical implementation, we consider $\lambda=0$ in \eqref{eq:FT} and only finetune with respect to $W_1$ in \Cref{stage-2}, to reduce the training cost. The motivation for doing so is the observation that the coefficient $\lambda$ in \Cref{stage-2} does not affect the final performance.
To justify this, we conduct the following ablation study: we run \Cref{stage-2} with the original two-term loss, \eqref{eq:FT}, while adjusting the coefficient $\lambda$ for the domain prediction loss. The test accuracies(\%) are shown in \Cref{fig:loss_coeff}. From the figure, one can observe that the value of $\lambda$ can be set to zero in \Cref{stage-2} without negatively affecting the final performance. Therefore, it is sufficient to consider $\lambda=0$ in \Cref{stage-2} and only fine-tune the model with the first loss term in \eqref{eq:FT}.
\end{itemize}

\begin{figure}[!t]
    \vspace{-2em}
    \centering
    \includegraphics[width=\textwidth]{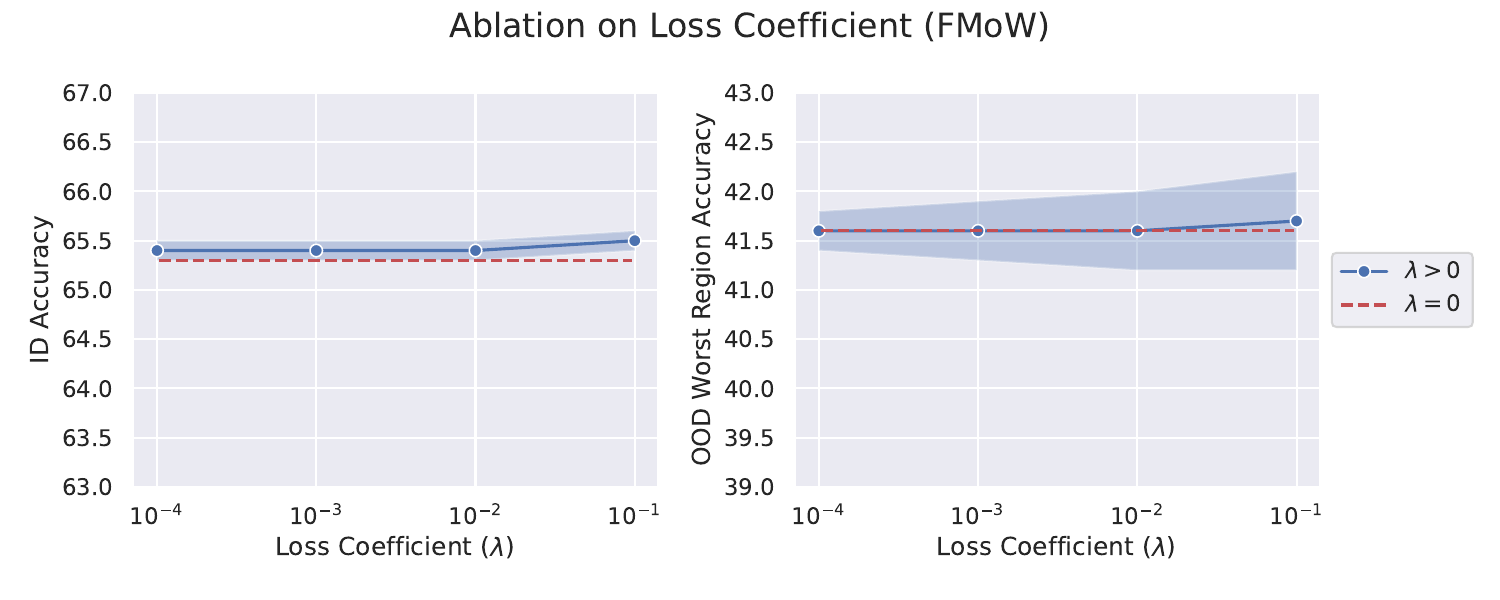}
    \caption{The ablation study on domain loss coefficients. The experiments are conducted on the FMoW dataset as an example. The \textcolor{red}{red} dashed lines are the performance of the model without supervision on the domain labels.}
    \label{fig:loss_coeff}
\end{figure}

\subsection{Discussion on Training Stability and Overhead}\label{supp:exp:trainign-stability}

Here we conduct an additional study to examine the training stability of our two-stage method, CFA. All of the following experiments are conducted for CLIP ViT-B/16 on the Office-Home dataset and are an average over 3 seeds.

We first show that our method is stable to the number of iterations in Stage-1. From Table \ref{table:train-stability:stage-1}, we can see that as the linear heads are trained longer in Stage-1, the ID (WiSE) performance slightly increases and the OOD accuracy reaches the peak at 4000 to 6000 iterations. Except for the trade-off between ID and OOD performance, our method is stable with no drastic drop in accuracy. The 6000 iterations of Stage-1 was the hyperparameter used for experiments in Table \ref{tab:main}.

\begin{table}[h]
\caption{Study on the training stability of CFA's Stage-1. }\label{table:train-stability:stage-1} 
\vspace{-0.5em}
\centering
\resizebox{0.5\textwidth}{!}{
\begin{tabular}{l|cccc}
\textbf{LP Iterations} & \textbf{ID}   & \textbf{ID (WiSE)} & \textbf{OOD}  & \textbf{OOD (WiSE)} \\ \hline
2000                   & \textbf{94.0} & 92.8               & 54.3          & 56.9                \\ 
4000                   & \textbf{94.0} & 93.1               & \textbf{54.3} & \textbf{57.3}       \\ 
6000 (Chosen)          & \textbf{94.0} & 93.1               & \textbf{54.3} & 56.9                \\ 
8000                   & \textbf{94.0} & 93.6               & 53.3          & 56.5                \\ 
10000                  & \textbf{94.0} & \textbf{93.7}      & 52.1          & 55.0          
\end{tabular}
}
\vspace{-1em}
\end{table}

We then demonstrate how training epochs in Stage-2 will affect the final performance of our model. Table \ref{table:train-stability:stage-2} shows that for longer training epochs, the ID accuracy increases and the OOD accuracy decreases. Hence, in our paper, we used 3 epochs for the best OOD performance.

\begin{table}[h]

\caption{Study on the training stability of CFA's Stage-2. }\label{table:train-stability:stage-2}
\vspace{-0.5em}
\centering
\resizebox{0.6\textwidth}{!}{
\begin{tabular}{l|cccc}
\textbf{Epochs} & \textbf{ID Acc} & \textbf{ID Acc (WiSE)} & \textbf{OOD Acc} & \textbf{OOD Acc (WiSE)} \\\hline
3 (Chosen)      & 94.0            & 93.1                   & \textbf{54.3}    & \textbf{56.9}           \\
5               & \textbf{94.1}   & 93.5                   & 52.3             & 56.3                    \\
10              & \textbf{94.1}   & \textbf{93.7}          & 52.2             & 55.7                   
\end{tabular}
}
\end{table}

\paragraph{Parameter Overhead} Compared with the vanilla full fine-tuning, our CFA introduces an additional linear classifier for domain labels in Stage-1. In terms of model parameters, we provide the parameter count of CLIP and DINOv2 on DomainNet, in Table \ref{table:overhead:parameters}. One can see that the additional parameters ($e$-classifier) introduced by CFA is negligible compared with the total number of parameters.

\begin{table}[h]
\caption{Parameter count for CLIP and DINOv2 on DomainNet. $y$-classifier is the linear classifier for class labels, and $e$-classifier is the linear classifier for domain labels.
Vision encoders, language encoders and $y$-classifier are modules shared between the vanilla fine-tuning and CFA, and the last column is additional parameters introduced by CFA.}\label{table:overhead:parameters}
\vspace{-0.5em}
\centering
\resizebox{0.8\textwidth}{!}{
\begin{tabular}{l|llll}
\textbf{Parameters} & \textbf{Vision Encoder}  & \textbf{Language Encoder} & \textbf{$y$-classifier} & \textbf{$e$-classifier} (CFA only) \\\hline
CLIP ViT-B   & $86\times 10^6$ & $91\times 10^6$  & $177\times 10^3$ & $3.1\times 10^3$ \\
DINOv2 ViT-B & $86\times 10^6$ & -                & $265\times 10^3$ & $4.6\times 10^3$
\end{tabular}
}
\vspace{-1.5em}
\end{table}
\paragraph{Compute Overhead}
The training cost of Stage-1 (linear probing) is much smaller compared to Stage-2 (backbone finetuning). Stage-1 requires only a single forward pass over training samples for feature gathering (features are stored in CPU memory), and then the linear probing can be achieved fastly on CPU. The training cost for Stage-2 of CFA is slightly less than standard full finetuning, as the last layer remains frozen during Stage-2 of CFA. Overall, we conclude that the additional training costs of CFA are mild (mostly coming from the forward pass over training samples), which is less than that of \textit{one fine-tuning epoch}. In other words, compared with vanilla fine-tuning for $N$ epochs, our CFA's training compute cost is less than fine-tuning for $N+1$ epochs. The modest compute overhead of CFA making it a practical method for addressing compositional generalization problems in real-world applications.

\paragraph{Memory and Speed Overhead} For CFA, the memory requirement of Stage-1 is less than Stage-2 since Stage-1 only involves forward passes of the image encoder (no backward pass). For Stage-2 of CFA, the memory usage and training speed are the\textit{ same as that of the vanilla finetuning}, since Stage-2 of CFA does not introduce additional parameters or loss for training.

\subsection{Discussion of the Performance Gain}

From Table \ref{tab:main}, one may think the performance gain of CFA over reweighting, a strong baseline method, is not significant. However, our main contribution lies in designing a principled algorithm for compositional generalization, a novel challenge in OOD generalization. As our theoretical analysis and feature visualization in Fig.~\ref{fig:DomainNet-features} demonstrate (on Color MNIST and DomainNet, respectively), our method aligns features with a compositional structure suitable for compositional generalization. In OOD generalization research, it's typical for specialized algorithms to only modestly outperform baselines like ERM, as evidenced by extensive benchmarking in studies like DomainBed \citep{DomainBed} and WILDS \citep{WILDS}. Thus, we consider CFA's performance gain to be meaningful and convincing. Additionally, the mild implementation complexity and training costs (discussed in Sec. \ref{supp:exp:trainign-stability}) make CFA a viable method for enhancing compositional generalization.

Moreover, we want to emphasize that real-world datasets used in our paper have lots of labeling issues (for both class and domain labels), which may prevent CFA from obtaining greater performance gain. Notably, CFA employs both class and domain labels for training, leading to an increased susceptibility to label noise, particularly from domain labels. Below, we illustrate two types of labeling noise encountered:

\begin{itemize}
    \item \textit{Mislabeling}: We observe that the DomainNet dataset contains many mislabeled images. Many images in the ``Clipart'' domain are actually real photos of a single object on a white background, which should be categorized into the ``Product'' domain. On the other hand, numerous class labels are also mislabeled: images in the ``bush (plant)'' class contain pictures of President George Bush of the USA; the ``cooler'' class includes electric fan images, which are incorrectly categorized; the ``square (shape)'' class contains table, which should be placed into the ``table'' class instead. Note: We only manually examined a very tiny subset of the DomainNet dataset \citep{DomainBed}, which comprises 0.6 million images, and have already found many mislabeled images. Therefore, overall, we believe the mislabeling ratio is not negligible.

    \item \textit{Ambiguous Labels}: We observe that certain domains contain a large number of images that are visually similar to those in another domain. For example, in both Office-Home and DomainNet, the images in the ``Product'' domain are real photos of objects, making them almost indistinguishable from their counterparts in the ``Real'' domain. The distinguishing feature of the ``Product'' domain is that its images all have a white background; however, some images in the ``Real'' domain also share this characteristic. Additionally, in DomainNet, the ``Infograph'' domain contains many images stylistically similar to those in ``Clipart'' or ``Real''; some images in the ``Painting'' domain are sketches, despite the presence of a separate ``Sketch'' domain. This ambiguity issue extends to class labels as well. In DomainNet, the classes ``cup,'' ``coffee cup,'' and ``mug'' lack clear stylistic distinctions.
\end{itemize}

In addition to these issues, other aspects of these datasets may affect learning. For instance, the iWildCam dataset includes a class labeled ``empty'', signifying the absence of animals, which comprises a significant portion of the dataset.

\subsection{Effect of the Orthogonality Regularization}

To better understand the impact of the orthogonality regularization on the optimization process, we conducted an ablation study on Stage-1 of our CFA method using the DomainNet dataset. In this study, we optimized the objective in Equation 1 with different orthogonal regularization coefficients ($\lambda_{\mathrm{ortho}}$) for 20 epochs. The results of this ablation study are visualized in Fig.~\ref{fig:ortho-reg}.

As shown in Fig.~\ref{fig:ortho-reg}, the orthogonality loss (measured by $\|W_1^T W_2\|_F$) decreases as the linear probing epochs progress, indicating that the orthogonality regularization effectively enforces the orthogonality constraint between the two classifier heads $W_1$ and $W_2$. The rate of decrease in orthogonality loss is proportional to the value of the regularization coefficient $\lambda_{\mathrm{ortho}}$, with larger values resulting in faster convergence towards orthogonal classifier heads.

This ablation study demonstrates that the orthogonality regularization term in our CFA method is crucial for obtaining orthogonal classifier heads $W_1$ and $W_2$, which are subsequently used in Stage-2 of our approach. By effectively enforcing the orthogonality constraint, our method ensures that the learned features are well-suited for compositional generalization tasks.

\begin{figure}[!t]
    \vspace{-2em}
    \centering
    \includegraphics[width=.6\textwidth]{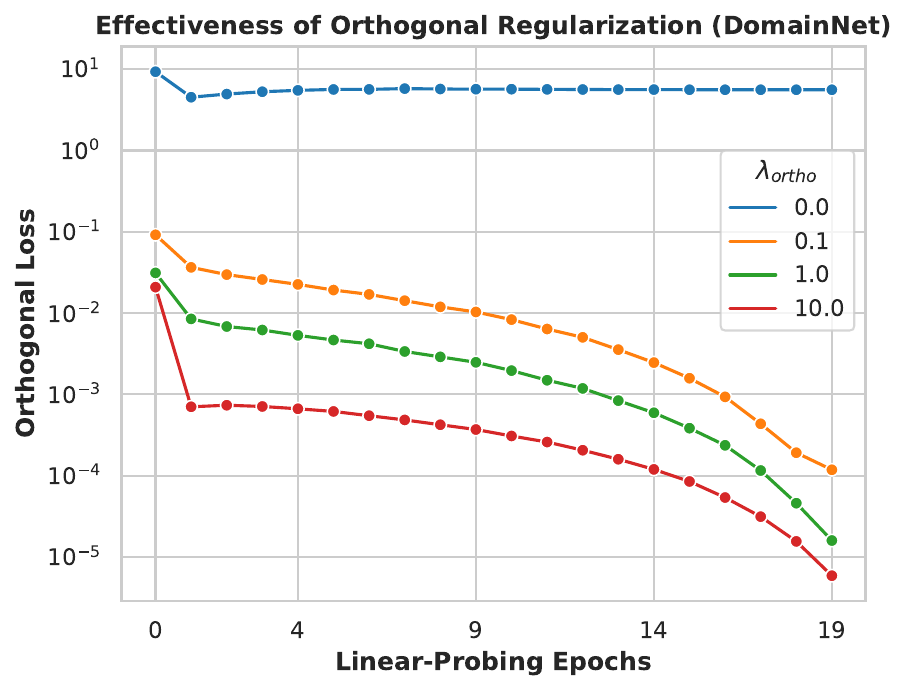}
    \caption{Effectiveness of the orthogonality regularization during Stage-1 linear probing on DomainNet. The orthogonality loss, $|W_1^T W_2|F$, is plotted against the number of epochs for different values of $\lambda{ortho}$. Larger $\lambda_{ortho}$ leads to faster convergence towards orthogonal $W_1$ and $W_2$.}
    \label{fig:ortho-reg}
\end{figure}
\end{document}